\definecolor{darkgreen}{rgb}{0,.4,.2}
\definecolor{darkblue}{rgb}{.1,.2,.6}
\definecolor{brightblue}{rgb}{0,0.6,0.8}
\newcommand{\pnorm}[2]{\| {#1} \|_{#2}}
\newcommand{\norm}[1]{\pnorm{#1}{}}
\newcommand{\R}{\mathbb{R}}
\DeclareMathOperator*{\argmax}{argmax}
\newcommand{\suboptlocal}{{\varepsilon}_{\hspace{-0.08em}D,k}}
\DeclareMathOperator*{\E}{\mathbf{E}}
\providecommand{\norm}[1]{\left\lVert#1\right\rVert}
\newcommand{\x}{\bm{x}}
\newcommand{\wv}{{\bm{w}}}
\newcommand{\av}{\bm{\alpha}}
\newcommand{\0}{\mathbf{0}} % the origin
\newcommand{\ignore}[1]{}%{\textbf{***begin ignore***}\\#1\textbf{***end ignore***}}
\newtheorem*{rep@theorem}{\rep@title}
\newcommand{\newreptheorem}[2]{%
\newenvironment{rep#1}[1]{%
 \def\rep@title{#2 \ref{##1}}%
 \begin{rep@theorem}}%
 {\end{rep@theorem}}}
\newtheorem{definition}{Definition}
\newtheorem{theorem}[definition]{Theorem}
\newtheorem{proposition}[definition]{Proposition}
\newtheorem{lemma}[definition]{Lemma}
\newtheorem{assumption}{Assumption}
\title{Communication-Efficient \\ Distributed Dual Coordinate Ascent}
\author{
Martin Jaggi $^{*}$\\
ETH Zurich \\
\And
Virginia Smith
\thanks{Both authors contributed equally.}
\\
UC Berkeley \\
\And
Martin Tak\'a\v{c}\\
Lehigh University \\
\AND
Jonathan Terhorst \\
UC Berkeley \\
\And
Sanjay Krishnan \\
UC Berkeley \\
\And
Thomas Hofmann\\
ETH Zurich \\
\And
Michael I. Jordan \\
UC Berkeley \\
}
\newcommand{\algname}{\textsc{CoCoA}\xspace}  %
\newcommand{\localalgname}{\textsc{LocalDualMethod}\xspace}
\newcommand{\localSDCA}{\textsc{LocalSDCA}\xspace}
\newcounter{procCf}
\newenvironment{procedureCustom}
  {\refstepcounter{procCf}%
    \begin{algorithm}}
  {\end{algorithm}\addtocounter{algocf}{-1}}
\begin{document}

\maketitle

\begin{abstract} 
Communication remains the most significant bottleneck in the performance of distributed optimization algorithms for large-scale machine learning. In this paper, we propose a communication-efficient framework, \algname, that %
uses local computation in a primal-dual setting to dramatically reduce the amount of necessary communication. We provide a strong convergence rate analysis for this class of algorithms, as well as experiments on real-world distributed datasets with implementations in \textsf{\small Spark}. In our experiments, we find that as compared to state-of-the-art mini-batch versions of SGD and SDCA algorithms, \algname converges to the same $.001$-accurate solution quality on average $25\times$ %
as quickly. %
\end{abstract}

\section{Introduction} \label{introduction}

With the immense growth of available data, developing distributed algorithms
for machine learning is increasingly important, and yet
remains a challenging topic both theoretically and in practice.
On typical real-world systems, communicating data between machines is vastly more expensive than reading data from main memory,
e.g. by a factor of several orders of magnitude when leveraging commodity hardware.\footnote{On typical computers, the latency for accessing data in main memory is in the order of 100 nanoseconds. In contrast, the latency for sending data over a standard network connection is around 250,000 nanoseconds.}
Yet, despite this reality, most existing distributed optimization methods for machine learning require significant communication between workers, often equalling the amount of local computation (or reading of local data).
This includes for example popular mini-batch versions of online methods, such as stochastic subgradient (SGD) and coordinate descent (SDCA).

In this work, we target this bottleneck. We propose a distributed optimization framework that allows one to freely steer the trade-off between \emph{communication} and \emph{local computation}. In doing so, the framework can be easily adapted to the diverse spectrum of available large-scale computing systems, from high-latency commodity clusters to low-latency supercomputers or the multi-core setting.

Our new framework, \algname (\textbf{Co}mmunication-efficient distributed dual \textbf{Co}ordinate \textbf{A}scent), supports objectives for linear reguarlized loss minimization, encompassing a broad class of machine learning models. By leveraging the primal-dual structure of these optimization problems, \algname effectively combines partial results from local computation while avoiding conflict with updates simultaneously computed on other machines. %
In each round, \algname employs steps of an arbitrary dual optimization method on the local data on each machine, in parallel. A single update vector is then communicated to the master node.
For example, when choosing to perform $H$ iterations (usually order of the data size $n$) of an online optimization method locally per round, our scheme saves a factor of $H$ in terms of communication compared to the corresponding naive distributed update scheme (i.e., updating a single point before communication). When processing the same number of datapoints, this is clearly a dramatic savings.

Our theoretical analysis (Section \ref{theory}) shows that this significant 
reduction in  communication cost comes with only a very moderate increase 
in the amount of total computation, in order to reach the same optimization 
accuracy. We show that, in general, the distributed \algname framework will inherit the convergence rate of the internally-used local optimization method.
When using SDCA (randomized dual coordinate ascent) as the local optimizer
and assuming smooth losses, this convergence rate is geometric.

In practice, our experiments with the method implemented on the fault-tolerant
\textsf{\small Spark} platform %
 \cite{Zaharia:2012ve} confirm both the clock
time performance and huge communication savings of the proposed method on a
variety distributed datasets.
Our experiments consistently show order of magnitude gains over traditional
mini-batch methods of both SGD and SDCA, and significant gains over the faster but theoretically
less justified local SGD methods.

\paragraph{Related Work.}
As we discuss below (Section \ref{relatedwork}), our approach
is distinguished from recent work on parallel and distributed
optimization \cite{Yang:2013vl,Yang:2013ui,Takac:2013ut,Niu:2011wo,Zinkevich:2010tj,Bradley:2011wq,Marecek:2014vu,Necoara:2013iw}
in that we provide a general framework for
improving the communication efficiency of \emph{any} dual optimization
method. 
To the best of our knowledge, our work is the first to analyze the 
convergence rate for an algorithm with this level of communication efficiency, without making data-dependent assumptions.
The presented analysis covers the case of smooth losses, but should also be 
extendable to the non-smooth case.
Existing methods using mini-batches \cite{Takac:2013ut,Yang:2013vl,Takac:2014vq} 
are closely related, though our algorithm makes 
significant improvements by immediately applying all updates 
locally while they are processed, a scheme that is not considered in the classic
mini-batch setting.
This intuitive modification results in dramatically improved empirical 
results and also strengthens our theoretical convergence rate.
More precisely, the convergence rate shown here only degrades with the 
number of workers~$K$, instead of with the significantly larger mini-batch-size
(typically order $n$) in the case of mini-batch methods.

Our method builds on a closely related recent line of work of
\cite{Yang:2013vl,Yang:2013ui,Yu:2010ct,Yu:2012fp}.
We generalize the algorithm of~\cite{Yang:2013vl,Yang:2013ui} by allowing
the use of arbitrary (dual) optimization methods as the local subroutine within our framework. 
In the special case of using coordinate ascent as the local optimizer, the
resulting algorithm is very similar, though with a different computation of the
coordinate updates.
Moreover, we provide the first theoretical convergence rate analysis for such
methods, without making strong assumptions on the data. %

The proposed \algname framework in its basic variant is entirely free of
tuning parameters or learning rates, in contrast to SGD-based methods. The
only choice to make is the selection of the internal local optimization
procedure, steering the desired trade-off between communication and computation.
When choosing a primal-dual optimizer as the internal procedure, the 
duality gap readily provides a fair stopping criterion and efficient accuracy 
certificates during optimization.

\paragraph{Paper Outline.} 
The rest of the paper is organized as follows. In Section \ref{setup} we
describe the problem setting of interest. Section \ref{method} outlines the
proposed framework, \algname, and the convergence analysis of this method is
presented in Section \ref{theory}. We discuss related work in Section
\ref{relatedwork}, and compare against several other state-of-the-art methods empirically
in Section \ref{experiments}. 
\section{Setup} \label{setup}
A large class of methods in machine learning and signal processing can be posed as the minimization of a convex loss function of linear predictors with a convex regularization term:
\begin{equation}
\label{eq:sdcaPrimal}
\min_{w \in\R^d} \quad \Big[\ P(\wv) := \frac{\lambda}{2}\norm{\wv}^2 +\frac1n 
                       \sum_{i=1}^n \ell_i( \wv^T \x_i ) \ \Big], 
\end{equation}

Here the data training examples are real-valued vectors $\x_i \in \R^d$;
the loss functions~$\ell_i, i=1,\dots,n$ are convex and depend possibly on
labels $y_i \in \R$; and $\lambda>0$ is the
regularization parameter. 
Using the setup of~\cite{ShalevShwartz:2013wl}, we assume
the regularizer is the $\ell_2$-norm for convenience.
Examples of this class of problems include support vector machines, as well
as regularized %
linear and logistic regression, %
ordinal regression, and others.

The most popular method to solve problems of the form (\ref{eq:sdcaPrimal})
is the \emph{stochastic subgradient method} (SGD)
\cite{Robbins:1951ko,Bottou:2010uz,ShalevShwartz:2010cg}. In this setting, SGD becomes an online method where every iteration only requires access
to a single data example $(\x_i,y_i )$, and the convergence rate is
well-understood.

The associated conjugate \emph{dual} problem of \eqref{eq:sdcaPrimal} takes
the following form, and is defined over one dual variable per each example in
the training set.
\begin{equation}
    \label{eq:sdcaDual}
    \max_{\av \in \R^n} \quad \Big[ \ 
    D(\av) := - \frac{\lambda}{2} \norm{ A\av }^2
    - \frac1n \sum_{i=1}^n \ell_i^*(-\alpha_i) \ \Big],
\end{equation}
where $\ell_i^*$ is the conjugate (Fenchel dual) of the loss function
$\ell_i$%
, and the data matrix $A\in\R^{d\times n}$ collects the (normalized) data
examples $A_i := \frac{1}{\lambda n} \x_i$ in its columns. The duality comes with the convenient mapping from dual to primal variables
$\wv(\av) := A\av$ as given by the optimality
conditions~\cite{ShalevShwartz:2013wl}.
For any configuration of the dual variables $\av$, we have the duality gap
defined as $P(\wv(\av)) - D(\av)$. This gap is a computable certificate of
the approximation quality to the unknown true optimum $P(\wv^*) = D(\av^*)$,
and therefore serves as a useful stopping criteria for algorithms.

For problems of the form (\ref{eq:sdcaDual}), \emph{coordinate descent}
methods have proven to be very efficient, and come with several benefits over
primal methods.
In randomized \emph{dual coordinate ascent} (SDCA), updates are made to the
dual objective (\ref{eq:sdcaDual}) by solving for one coordinate completely
while keeping all others fixed. This algorithm has been implemented in a
number of software packages (e.g. \textsf{\small
LibLinear}~\cite{Hsieh:2008bd}), and has proven very suitable for
use in large-scale problems, while giving stronger convergence results than the
primal-only methods (such as SGD), at the same iteration
cost~\cite{ShalevShwartz:2013wl}. 
In addition to superior performance, this method also benefits from requiring
no stepsize, and having a well-defined stopping criterion given by the 
duality gap.

\section{Method Description} \label{method}

The \algname framework, as presented in Algorithm \ref{alg:cdca}, assumes that the data $\{(\x_i,y_i)\}_{i=1}^n$ for a regularized loss minimization problem of the form (\ref{eq:sdcaPrimal}) is distributed over $K$ worker machines.
We associate with the datapoints their corresponding dual variables $\{\alpha_i\}_{i=1}^n$, being partitioned between the workers in the same way.
The core idea is to use the dual variables to efficiently merge the parallel updates from the different workers without much conflict, by exploiting the fact that they all work on disjoint sets of dual variables.

\begin{algorithm}[H]
\label{alg:cdca}
\caption{\algname: Communication-Efficient Distributed Dual Coordinate Ascent}
\SetKwInput{Init}{Initialize}
\SetKwFor{Forloop}{for}{}{end}
\SetAlgoLined
\KwIn{$T \ge 1, $ scaling parameter $1\le \beta_K \le K$ (default: $\beta_K:=1$). }
\KwData{$\{(\x_i,y_i)\}_{i=1}^n$ distributed over $K$ machines}
\Init{$\av_{[k]}^{(0)} \gets \0$ for all machines $k$, and $\wv^{(0)} \gets \0$}
\Forloop{ $t = 1,2, \dots ,T$}{
	\Forloop{{\bfseries all machines $k = 1, 2, \dots ,K$ in parallel}}{
	$(\Delta\av_{[k]},\Delta\wv_{k}) \gets \localalgname(\av^{(t-1)}_{[k]},\wv^{(t-1)})$ \\
	$\av^{(t)}_{[k]} \gets \av^{(t-1)}_{[k]} + \frac{\beta_K}{K} \Delta\av_{[k]}$ \\
	}
    \textit{reduce} $\wv^{(t)} \gets \wv^{(t-1)} + \frac{\beta_K}{K} \sum_{k=1}^K \Delta \wv_{k}$ \\
}
\end{algorithm}

In each round, the $K$ workers in parallel perform some steps of an arbitrary optimization method, applied to their local data.  This internal procedure tries to maximize the dual formulation (\ref{eq:sdcaDual}), only with respect to their own local dual variables. We call this local procedure \localalgname, as specified in the template Procedure \ref{proc:localDualMethod}.
Our core observation is that the necessary information each worker requires about the state of the other dual variables can be very compactly represented by a single primal vector $\wv\in\R^d$, without ever sending around data or dual variables between the machines.

\SetAlgoSkip{}
\begin{procedureCustom}%
\caption{\localalgname: Dual algorithm for prob. \eqref{eq:sdcaDual} on a single coordinate block~$k$}
\label{proc:localDualMethod}
\SetKwInput{Init}{Initialize}
\SetKwFor{Forloop}{for}{}{end}
\SetAlgoLined
\KwIn{Local $\av_{[k]}\in\R^{n_k}$, and $\wv\in\R^d$ consistent with other coordinate blocks of $\av$ s.t. $\wv = A\av$}
\KwData{Local $\{(\x_i,y_i)\}_{i=1}^{n_k}$} %
\KwOut{$\Delta\av_{[k]}$ and $\Delta\wv := A_{[k]} \Delta\av_{[k]}$ }
\end{procedureCustom}

\SetAlgoSkip{}
\begin{procedureCustom}%
\caption{\localSDCA: SDCA iterations for problem \eqref{eq:sdcaDual} on a single coordinate block~$k$}
\label{proc:localSDCA}
\SetKwInput{Init}{Initialize}
\SetKwFor{Forloop}{for}{}{end}
\SetAlgoLined
\KwIn{$H \ge 1$, $\av_{[k]}\in\R^{n_k}$, and $\wv\in\R^d$ consistent with other coordinate blocks of $\av$ s.t. $\wv = A\av$}
\KwData{Local $\{(\x_i,y_i)\}_{i=1}^{n_k}$} %
\Init{$\wv^{(0)} \gets \wv$,\, $\Delta \av_{[k]} \gets {\bf 0} \in \R^{n_k}$}
	\Forloop{$h = 1, 2, \dots ,H$}{
		\textit{choose} $i \in \{1,2,\dots,n_k\}$ \textit{uniformly at random} \\  %
		\textit{find} $\Delta\alpha$ \textit{maximizing } $ %
		    - \frac{\lambda n}{2}  \norm{ \wv^{(h-1)} +\frac{1}{\lambda n} \Delta\alpha\,\x_i }^2 
		    - \ell_i^*\big(-(\alpha_i^{(h-1)}+\Delta\alpha)\big)$ \\
		$\alpha_i^{(h)}\gets \alpha_i^{(h-1)} + \Delta\alpha$ \\ %
		$(\Delta \alpha_{[k]})_i \gets (\Delta \alpha_{[k]})_i + \Delta\alpha$ \\
		$\wv^{(h)} \gets \wv^{(h-1)} + \frac{1}{\lambda n}\Delta\alpha\,\x_i$ \\
	}
\KwOut{$\Delta\av_{[k]}$ and $\Delta\wv := A_{[k]} \Delta\av_{[k]}$ }
\end{procedureCustom}

Allowing the subroutine to process more than one local data example per round dramatically reduces the amount of communication between the workers.
By definition, \algname in each outer iteration only requires communication of a single vector for each worker, that is $\Delta \wv_{k} \in \R^d$. Further, as we will show in Section \ref{theory}, \algname inherits the convergence guarantee of any %
algorithm run
locally on each node in the inner loop of Algorithm \ref{alg:cdca}. 
We suggest to use randomized dual coordinate ascent (SDCA) \cite{ShalevShwartz:2013wl} as the internal optimizer in practice, as implemented in Procedure \ref{proc:localSDCA}, and also used in our experiments.

\paragraph{Notation.} In the same way the data is partitioned across the $K$ worker machines, we write the dual variable vector as $\av = (\av_{[1]},\dots,\av_{[K]})%
\in \R^n$ with the corresponding coordinate blocks $\av_{[k]}\in\R^{n_k}$ such that $\sum_k n_k = n$.
The submatrix $A_{[k]}$ collects the columns of $A$ (i.e. rescaled data examples) which are available locally on the $k$-th worker. 
The parameter $T$ determines the number of outer iterations of the algorithm, while when using an online internal method such as \localSDCA, then the number of inner iterations $H$ determines the computation-communication trade-off factor.

\section{Convergence Analysis} \label{theory}

Considering the dual problem \eqref{eq:sdcaDual}, we define the local suboptimality on each coordinate block as: 
\begin{equation}\label{eq:suboptimality}
\suboptlocal(\av) := 
\max_{\hat \av_{[k]} \in \R^{n_k}} 
D((\av_{[1]},\dots,\hat \av_{[k]},\dots,\av_{[K]})) - D((\av_{[1]},\dots,\av_{[k]},\dots,\av_{[K]})),
\end{equation} that is how far we are from the optimum on block $k$ with all other blocks fixed.
Note that this differs from the global suboptimality %
$\max _{\hat \av} D(\hat \av) - D((\av_{[1]},\dots,\av_{[K]}))$.

\begin{assumption}[Local Geometric Improvement of \localalgname]\label{asm:localImprobvment}
We assume that there exists $\Theta\in [0,1)$ such that for any given $\av$, \localalgname when run on block $k$ alone returns a
(possibly random) update $\Delta \av_{[k]}$
such that
\begin{align}\label{eq:localGeometricRate}
 \E[\epsilon_{D,k}((\av_{[1]},\dots,\av_{[k-1]},\av_{[k]}+\Delta \av_{[k]},
 \av_{[k+1]},\dots, \av_{[K]}  )) ]
  \leq \Theta \cdot \epsilon_{D,k}( \av ).
\end{align}
\end{assumption}

Note that this assumption is satisfied for several available implementations of the inner procedure \localalgname, in particular for \localSDCA, as shown in the following Proposition.

From here on, we assume that the input data is scaled such that $\|\x_i\|\leq 1$ for all datapoints. Proofs of all statements are provided in the appendix.

\begin{proposition}\label{prop:smoothSDCApower}
Assume the loss functions $\ell_i$ are $(1/\gamma)$-smooth. Then for using \localSDCA, Assumption \ref{asm:localImprobvment} holds with\vspace{-2mm}
\begin{equation}\label{eq:asfkpvfwaf}
 \Theta = \left(  1-\frac{\lambda n \gamma}{1+\lambda n \gamma} \frac1{\tilde n} \right)^H.
\end{equation}
where $\tilde n := \max_k n_k$ is the size of the largest block of coordinates.
\end{proposition}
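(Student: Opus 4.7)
The plan is to reduce to the standard SDCA convergence analysis of Shalev-Shwartz and Zhang, applied to the local subproblem on block $k$ instead of to the global dual \eqref{eq:sdcaDual}. First I would fix the state $\av$ and all coordinate blocks $\av_{[j]}$ for $j\neq k$, and define the local subproblem
\begin{equation*}
 D_k(\av_{[k]}) \;:=\; D\bigl(\av_{[1]},\dots,\av_{[k-1]},\av_{[k]},\av_{[k+1]},\dots,\av_{[K]}\bigr).
\end{equation*}
Expanding $\|A\av\|^2 = \|A_{[k]}\av_{[k]} + \wv_{\other}\|^2$ with $\wv_{\other} := \sum_{j\neq k} A_{[j]}\av_{[j]}$ shows that, up to an additive constant depending only on the other blocks, $D_k$ has exactly the structure of the original SDCA dual: a quadratic term $-\tfrac{\lambda}{2}\|\wv_{\other} + \tfrac{1}{\lambda n}\sum_{i\in[k]}\alpha_i \x_i\|^2$ plus the conjugate loss sum $-\tfrac{1}{n}\sum_{i\in[k]}\ell_i^*(-\alpha_i)$. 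Crucially, the LocalSDCA update on coordinate $i\in[k]$ in Procedure \ref{proc:localSDCA} is precisely the SDCA step for $D_k$: it maximizes $D_k$ with respect to $\alpha_i$ with all other $\alpha_j$ held fixed, using the locally maintained $\wv^{(h)}$ as the carrier of the cross-block information $\wv_{\other}$.

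Next I would invoke the one-step contraction at the heart of Shalev-Shwartz and Zhang's proof for smooth losses. For a single SDCA step on a coordinate $i$ chosen uniformly from $\{1,\dots,n_k\}$, their argument, applied verbatim to $D_k$, gives
\begin{equation*}
 \E\bigl[D_k(\av_{[k]}+\Delta\av_{[k]}) - D_k(\av_{[k]})\bigr]
 \;\ge\; \frac{1}{n_k}\cdot \frac{\lambda n\gamma}{1+\lambda n\gamma}\cdot \suboptlocal(\av).
\end{equation*}
The factor $\tfrac{\lambda n\gamma}{1+\lambda n\gamma}$ arises in exactly the same way as in the global analysis: $\ell_i^*$ is $\gamma$-strongly convex (from $(1/\gamma)$-smoothness of $\ell_i$) and enters $D_k$ with weight $\tfrac{1}{n}$, while the quadratic term contributes curvature $\tfrac{\|\x_i\|^2}{\lambda n^2}\le \tfrac{1}{\lambda n^2}$ in the $\alpha_i$ direction; balancing a trial update $\Delta\alpha_i = s(\alpha_i^* - \alpha_i)$ and optimizing over $s\in[0,1]$ produces this factor. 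The only change from the global setting is that the random coordinate is drawn uniformly from the $n_k$ indices in block $k$, which replaces the $1/n$ sampling probability by $1/n_k$. None of the quantities $\lambda$, $n$ (appearing via the normalization $A_i = \x_i/(\lambda n)$), or $\gamma$ depend on the block.

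Rearranging, one LocalSDCA step satisfies
\begin{equation*}
 \E[\suboptlocal(\new)\mid \old] \;\le\; \Bigl(1 - \frac{\lambda n\gamma}{1+\lambda n\gamma}\cdot \frac{1}{n_k}\Bigr)\,\suboptlocal(\old).
\end{equation*}
Bounding $n_k\le \tilde n$ gives the uniform (block-independent) contraction factor $1 - \frac{\lambda n\gamma}{1+\lambda n\gamma}\cdot \tfrac{1}{\tilde n}$, and iterating $H$ times with the tower property of conditional expectation yields $\Theta = \bigl(1-\frac{\lambda n\gamma}{1+\lambda n\gamma}\tfrac{1}{\tilde n}\bigr)^H$ as claimed.

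The main obstacle is simply bookkeeping: verifying that the standard SDCA one-step lemma still uses the global $n$ in the factor $\lambda n\gamma$ while correctly picking up the \emph{local} sampling denominator $n_k$ (and hence $\tilde n$). This amounts to re-reading the proof of Lemma~20 / Theorem~1 of Shalev-Shwartz and Zhang and checking that the data normalization $A_i = \x_i/(\lambda n)$ and the loss weights $\tfrac{1}{n}$ are untouched by restricting to a subset of coordinates; only the uniform sampling distribution changes. Once that is confirmed, everything else is routine.
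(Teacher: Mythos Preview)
Your proposal is correct and follows essentially the same route as the paper: both reduce to the local dual subproblem $D_k$ (with the other blocks frozen into a fixed vector $\overline\wv$), observe that the \textsc{LocalSDCA} step is exactly an SDCA coordinate step on $D_k$, apply the Shalev-Shwartz--Zhang one-step contraction with the global normalization $\lambda n\gamma$ but the local sampling denominator $n_k$, iterate $H$ times, and then replace $n_k$ by $\tilde n$. The only cosmetic difference is that the paper explicitly introduces a local primal $P_k$ and routes the one-step bound through the local duality gap $P_k-D_k\ge \suboptlocal$ before contracting, whereas you invoke the SDCA lemma as a black box directly on the suboptimality; the bookkeeping you flag (global $n$ in $\lambda n\gamma$, local $n_k$ from sampling) is precisely what the paper verifies.
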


\begin{theorem}\label{thm:convergence}
Assume that Algorithm \ref{alg:cdca} is run for $T$ outer iterations on $K$ worker machines, 
with the procedure \localalgname having local geometric improvement $\Theta$, and let $\beta_K:=1$.
Further, assume the loss functions $\ell_i$ are $(1/\gamma)$-smooth.
Then the following geometric convergence rate holds for the global (dual) objective:
\begin{equation}\label{eq:convergenceResult}
\E[D(\av^*) - D(\av^{(T)})]
\leq 
\left(1-(1-\Theta)
 \frac1K
  \frac{\lambda n \gamma }{\sigma
   + \lambda n \gamma }
\right)^T     \left(D(\av^*) - D(\av^{(0)})\right).
\end{equation}
Here $\sigma$ is any real number satisfying
\begin{align}\label{eq:sigma}
 \sigma
\geq%
\sigma_{\min} := \max_{\av \in \R^n}
\lambda^2 n^2 %
\frac{ \textstyle{\sum}_{k=1}^K   
  \norm{ A_{[k]}   \av_{[k]} }^2  -
  \norm{ A \av }^2 
   }
   {\norm{ \av }^2   } \geq 0.
\end{align}
\end{theorem}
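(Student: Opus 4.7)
The plan is to bound the per-outer-iteration progress in three stages: (A) a ``combining'' inequality showing the aggregated step is at least as good as the average of the per-block updates; (B) a lemma relating $\sum_k\epsilon_{D,k}(\av)$ to the global suboptimality $D(\av^*)-D(\av)$ via the parameter $\sigma$; and (C) chaining to the geometric rate.

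\textbf{Stage A.} Let $\Delta\av^{(k)}\in\R^n$ denote $\Delta\av_{[k]}$ extended by zero outside block $[k]$, so that the aggregated update in Algorithm~\ref{alg:cdca} (with $\beta_K=1$) is $\tfrac{1}{K}\sum_k\Delta\av^{(k)}$. Split $D=D_q+D_s$ with $D_q(\av):=-\tfrac{\lambda}{2}\|A\av\|^2$ and $D_s(\av):=-\tfrac{1}{n}\sum_i\ell_i^*(-\alpha_i)$. For $D_s$, pointwise concavity of each $-\ell_i^*$ gives $D_s(\av+\tfrac{1}{K}\Delta\av)-D_s(\av)\ge\tfrac{1}{K}\sum_k[D_s(\av+\Delta\av^{(k)})-D_s(\av)]$, since each coordinate $i\in[k]$ is modified by only one of the $K$ block updates. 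For $D_q$, a direct expansion shows the gap between the combined and averaged increments equals $\tfrac{\lambda}{2K}\bigl[\sum_k\|A_{[k]}\Delta\av_{[k]}\|^2-\tfrac{1}{K}\|A\Delta\av\|^2\bigr]$, which is nonnegative by Cauchy--Schwarz because $\|A\Delta\av\|^2=\|\sum_kA_{[k]}\Delta\av_{[k]}\|^2\le K\sum_k\|A_{[k]}\Delta\av_{[k]}\|^2$. Hence $D(\av^{(t)})-D(\av^{(t-1)})\ge\tfrac{1}{K}\sum_k[D(\av^{(t-1)}+\Delta\av^{(k)})-D(\av^{(t-1)})]$, and Assumption~\ref{asm:localImprobvment} (rewritten as $\E[D(\av+\Delta\av^{(k)})-D(\av)]\ge(1-\Theta)\epsilon_{D,k}(\av)$) upgrades this to $\E[D(\av^{(t)})-D(\av^{(t-1)})\mid\av^{(t-1)}]\ge\tfrac{1-\Theta}{K}\sum_k\epsilon_{D,k}(\av^{(t-1)})$.

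\textbf{Stage B.} The lemma $\sum_k\epsilon_{D,k}(\av)\ge\tfrac{\lambda n\gamma}{\sigma+\lambda n\gamma}\bigl[D(\av^*)-D(\av)\bigr]$ is proved by a one-parameter test-point argument. Set $\Delta:=\av^*-\av$, pick $s\in[0,1]$, and use $\epsilon_{D,k}(\av)\ge D(\av+s\Delta^{(k)})-D(\av)$ where $\Delta^{(k)}$ is $\Delta$ supported on block $[k]$. Summing over $k$, expanding $D$ into its quadratic and separable parts, using $\gamma$-strong convexity of each $\ell_i^*$ (from $(1/\gamma)$-smoothness of $\ell_i$) to pick up a $+\tfrac{\gamma s(1-s)}{2n}\|\Delta\|^2$ bonus on the separable part, and applying the $\sigma$-definition $\sum_k\|A_{[k]}\Delta_{[k]}\|^2\le\|A\Delta\|^2+\tfrac{\sigma}{\lambda^2n^2}\|\Delta\|^2$ on the quadratic part, a short rearrangement yields
\[
\sum_k\!\bigl[D(\av+s\Delta^{(k)})-D(\av)\bigr]-s\bigl[D(\av^*)-D(\av)\bigr]\ \ge\ \tfrac{s(1-s)\lambda}{2}\|A\Delta\|^2+\tfrac{s\|\Delta\|^2}{2n}\bigl[\gamma(1-s)-\tfrac{s\sigma}{\lambda n}\bigr].
\]
Choosing $s=s^*:=\tfrac{\lambda n\gamma}{\sigma+\lambda n\gamma}$ makes the bracket vanish exactly, and dropping the remaining nonnegative $\|A\Delta\|^2$ term completes the lemma.

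\textbf{Stage C.} Combining Stages A and B yields the one-step contraction $\E[D(\av^*)-D(\av^{(t)})\mid\av^{(t-1)}]\le\bigl(1-\tfrac{1-\Theta}{K}\cdot\tfrac{\lambda n\gamma}{\sigma+\lambda n\gamma}\bigr)[D(\av^*)-D(\av^{(t-1)})]$, and iterating $T$ times via the tower property of conditional expectation gives the stated bound. The main obstacle is Stage B: the $\gamma$-strong-convexity bonus must be balanced precisely against the $\sigma$-quadratic penalty, and the specific rate denominator $\sigma+\lambda n\gamma$ emerges only through the exact cancellation at the tuned value $s^*$.
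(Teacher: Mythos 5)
Your proposal is correct and follows essentially the same route as the paper's proof: Stage A is the paper's Jensen/concavity step (the paper applies concavity of $D$ wholesale to the convex combination $\tfrac1K\sum_k(\av+\Delta\av^{(k)})$, whereas you decompose into quadratic and separable parts and verify each, which is equivalent), and Stage B is precisely the paper's argument of restricting to the segment $\hat\av=\av+s(\av^*-\av)$, using $\gamma$-strong convexity of the $\ell_i^*$ and the definition of $\sigma$, and tuning $s^*=\tfrac{\lambda n\gamma}{\sigma+\lambda n\gamma}$ (the paper's $\eta^*$). Stage C matches the paper's conclusion verbatim.
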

\begin{lemma}\label{lem:sigmaBound}
If $K=1$ then $\sigma_{\min}=0$.
For any $K\geq 1$, when assuming $\|\x_i\|\leq 1 \ \ \forall i$, we have 
\[
0 \le \sigma_{\min} \le \tilde n .
\]
Moreover, if datapoints between different workers are orthogonal, i.e. $(A^TA)_{i,j} = 0$ $\forall i,j$ such that $i$ and $j$ do not belong to the same part, then $\sigma_{\min}=0$.
\end{lemma}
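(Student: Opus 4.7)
All three claims reduce to understanding the identity
\[
\sum_{k=1}^K \|A_{[k]} \av_{[k]}\|^2 - \|A\av\|^2 \;=\; -\sum_{k \ne l} \langle A_{[k]} \av_{[k]},\, A_{[l]} \av_{[l]} \rangle,
\]
which follows by expanding $\|A\av\|^2 = \bigl\|\sum_k A_{[k]} \av_{[k]}\bigr\|^2$ (each column of $A$ lies in exactly one block). When $K=1$ the right-hand sum is empty, so the numerator in the definition of $\sigma_{\min}$ vanishes identically in $\av$, giving $\sigma_{\min}=0$. Similarly, the orthogonality hypothesis states $A_{[k]}^T A_{[l]} = 0$ for $k \ne l$, which makes each cross term $\av_{[k]}^T A_{[k]}^T A_{[l]} \av_{[l]}$ vanish; again the numerator is identically $0$ and $\sigma_{\min}=0$.

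For the general nonnegativity $\sigma_{\min}\ge 0$, I would exhibit a feasible $\av$ achieving value $0$ in the maximization: take any nonzero $\av$ supported on a single block $k_0$ (i.e.\ $\av_{[l]}=\0$ for $l\ne k_0$). Then $A\av = A_{[k_0]} \av_{[k_0]}$ and only the $k_0$-term survives in $\sum_k \|A_{[k]}\av_{[k]}\|^2$, so the numerator equals $0$, and the maximum is thus $\ge 0$.

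For the upper bound $\sigma_{\min} \le \tilde n$, first drop the term $-\|A\av\|^2 \le 0$; this can only increase the ratio. Then use the normalization $A_i = \x_i/(\lambda n)$ to rewrite $\lambda^2 n^2 \|A_{[k]} \av_{[k]}\|^2 = \|X_{[k]} \av_{[k]}\|^2$, where $X_{[k]}$ is the matrix of raw data columns $\x_i$ assigned to worker $k$. Applying the standard bound $\|X_{[k]}\|_{op}^2 \le \|X_{[k]}\|_F^2 = \sum_{i\in[k]} \|\x_i\|^2 \le n_k$ (since $\|\x_i\|\le 1$) yields
\[
\|X_{[k]} \av_{[k]}\|^2 \;\le\; n_k \, \|\av_{[k]}\|^2 \;\le\; \tilde n \, \|\av_{[k]}\|^2.
\]
Summing over $k$ produces $\tilde n \, \|\av\|^2$ on the right, and dividing by $\|\av\|^2$ finishes the proof.

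I do not anticipate a serious obstacle here: each claim is essentially a single-line identity or Cauchy-Schwarz-style estimate. The only mild subtlety is that the upper bound $\tilde n$ is fairly loose (it ignores the $-\|A\av\|^2$ term entirely and estimates the operator norm of $X_{[k]}$ by its Frobenius norm), but the statement we are asked to prove tolerates exactly this amount of slack.
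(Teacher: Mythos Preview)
Your proposal is correct and follows essentially the same route as the paper: you drop the nonpositive term $-\|A\av\|^2$, rescale via $A=\tfrac{1}{\lambda n}X$, and bound each $\|X_{[k]}\|_{op}^2$ by $\|X_{[k]}\|_F^2\le n_k\le\tilde n$, exactly as in the paper's chain of inequalities. Your cross-term identity and the single-block witness for $\sigma_{\min}\ge 0$ spell out the ``trivial'' cases a bit more explicitly than the paper does, but the arguments are the same in substance.
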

If we choose $K=1$ then, Theorem \ref{thm:convergence} together with Lemma \ref{lem:sigmaBound} implies that
\[
\E[D(\av^*) - D(\av^{(T)})]
\leq 
\Theta^T     \left(D(\av^*) - D(\av^{(0)})\right),
\]
as expected, showing that the analysis is tight in the special case $K=1$. 
More interestingly, we observe that for any $K$, in the extreme case when the subproblems are solved to optimality
(i.e. letting $H\to \infty$ in \localSDCA), then the algorithm as well as the convergence rate match
that of serial/parallel \emph{block-coordinate descent} \cite{Richtarik:2014fe,Richtarik:2012vf}.

\paragraph{Note:}
If choosing the starting point as $\av^{(0)} := {\bf 0}$ as in the main algorithm, then it is known that $D(\av^*) - D(\av^{(0)})\leq 1$ (see e.g. Lemma 20 in \cite{ShalevShwartz:2013wl}).
\section{Related Work}\label{relatedwork}

\paragraph{Distributed Primal-Dual Methods.}
Our approach is most closely related to recent work by~\cite{Yang:2013vl,Yang:2013ui}, which generalizes the 
distributed optimization method for linear SVMs as in \cite{Yu:2010ct} 
to the primal-dual setting considered here
(which was introduced by~\cite{ShalevShwartz:2013wl}).
The difference between our approach and the `practical' method of \cite{Yang:2013vl}
is that our internal steps directly correspond to coordinate descent iterations
on the global dual objective (\ref{eq:sdcaDual}), for coordinates in the current 
block, while in \cite[Equation~8]{Yang:2013ui} and~\cite{Yang:2013vl}, the inner
iterations apply to a slightly different notion of the sub-dual problem defined 
on the local data. In terms of convergence results, the analysis of \cite{Yang:2013vl}
only addresses the mini-batch case without local updates, while the more recent paper~\cite{Yang:2013ui} shows a convergence rate for a variant of \algname with inner coordinate steps, but under the unrealistic assumption that the data is orthogonal
between the different workers. In this case, the optimization problems become
independent, so that an even simpler single-round communication scheme 
summing the individual resulting models $\wv$ would give an exact solution. 
Instead, we show a linear convergence rate for the full problem class of smooth 
losses, without any assumptions on the data, in the same generality as the 
non-distributed setting of~\cite{ShalevShwartz:2013wl}.

While the experimental results in all papers \cite{Yu:2010ct,Yang:2013vl,Yang:2013ui} 
are encouraging for this type of method, they do not yet provide a 
quantitative comparison of the gains in communication efficiency, or compare to 
the analogous SGD schemes that use the same distribution and 
communication patterns, which is the main goal or our experiments in 
Section \ref{experiments}.
For the special case of linear SVMs, the first paper to propose the same algorithmic
idea was \cite{Yu:2010ct}, which used \textsf{\small LibLinear} in the
inner iterations. However, the proposed algorithm \cite{Yu:2010ct} processes the blocks
sequentially (not in the parallel or distributed setting). Also, it is
assumed that the subproblems are solved to near optimality on each block
before selecting the next, making the method essentially standard
block-coordinate descent. While no convergence rate was given, the empirical
results in the journal paper \cite{Yu:2012fp} suggest that running
\textsf{\small LibLinear} for just one pass through the local data
performs well in practice. Here, we prove this, quantify the communication efficiency,
and show that fewer local steps can improve the overall performance.
For the {\textsc LASSO} case, \cite{Bradley:2011wq} has proposed a parallel coordinate
descent method converging to the true optimum, which could potentially also be 
interpreted in our framework here. %
\paragraph{Mini-Batches.}
Another closely related avenue of research includes methods that use \emph{mini-batches} to distribute updates. In these methods, a
mini-batch, or sample, of the data examples is selected for processing at
each iteration. All updates within the mini-batch are computed based on the
same fixed parameter vector $\wv$, and then these updates are either added or
averaged in a reduce step and communicated back to the worker machines. This concept has been studied for both SGD and SDCA, see e.g. \cite{Takac:2013ut,Takac:2014vq} for the SVM case.
The so-called naive variant of \cite{Yang:2013vl} is essentially identical to mini-batch dual coordinate descent, with a slight difference in defining the sub-problems. 

As is shown in \cite{Yang:2013vl} and below in Section \ref{experiments}, the performance of these algorithms suffers when processing large batch sizes, as they do not take local updates immediately into account. Furthermore, they are very
sensitive to the choice of the parameter $\beta_b$, which controls the
magnitude of combining all updates between $\beta_b:=1$ for (conservatively) averaging,
and $\beta_b:=b$ for (aggressively) adding the updates (here we denote $b$ as
the size of the selected mini-batch, which can be of size up to $n$). 
This instability is illustrated by the fact that even the change of
 $\beta_b:=2$ instead of $\beta_b:=1$ can lead to divergence of 
coordinate descent (SDCA) in the simple case of just two
coordinates~\cite{Takac:2013ut}
.
In practice it can be very difficult to choose the correct data-dependent parameter
$\beta_b$ especially for large mini-batch sizes $b\approx n$, as the parameter
range spans many orders of magnitude, and directly controls the step size of the resulting algorithm, and therefore the convergence
rate \cite{Richtarik:2013us,Fercoq:2014ut}%
.
For sparse data, the work of \cite{Richtarik:2013us,Fercoq:2014ut} gives 
some data dependent choices of $\beta_b$ which are safe.

Known convergence rates for the mini-batch methods %
degrade linearly with the growing batch size $b\approx \Theta(n)$. More precisely, the improvement in objective function per example processed degrades with a factor of $\beta_b$ in \cite{Takac:2013ut,Richtarik:2013us,Fercoq:2014ut}.
In contrast, our convergence rate as shown in Theorem \ref{thm:convergence} 
only degrades with the much smaller number of worker machines $K$, which 
in practical applications is often several orders of magnitudes smaller than the mini-batch size $b$.

\paragraph{Single Round of Communication.} %
One extreme is to consider methods with only a single round of
communication (e.g. one map-reduce operation), as in
\cite{Zhang:2013wq, Zinkevich:2010tj,Mann:2009tr}.
The output of these methods is the average of $K$ individual models, trained only on the local data on each machine. In \cite{Zhang:2013wq}, the authors give conditions on the data and computing environment under which these one-communication algorithms may be sufficient. %
In general, however, the true optimum of the original problem (\ref{eq:sdcaPrimal})
is \emph{not} the average of these $K$ models, no matter how 
accurately the subproblems are solved \cite{Shamir:2014vf}.

\paragraph{Naive Distributed Online Methods, Delayed Gradients, and Multi-Core.}
On the other extreme, a natural way to distribute updates is to let every machine
send updates to the master node (sometimes called the ``parameter server'')
as soon as they are performed.
This is what we call the \emph{naive distributed SGD / CD} 
in our experiments. 
The amount of communication for such naive distributed online methods is 
the same as the number of data examples processed. In contrast to this, 
the number of communicated vectors in our method is divided by $H$, 
that is the number of inner local steps performed per outer iteration, which can 
be $\Theta(n)$.

The early work of \cite{Tsitsiklis:1986ee} introduced the nice framework of
gradient updates where the gradients come with some delays, i.e. are based on
outdated iterates, and shows some robust convergence rates.
In the machine learning setting, \cite{Dekel:2012wm} and the later work of
\cite{Agarwal:2011vo} have provided additional insights into these types of methods.
However, these papers study the case of smooth objective functions of a sum
structure, and so do not directly apply to general case we consider here.
In the same spirit,~\cite{Niu:2011wo} %
implements SGD with communication-intense updates after each example
processed, allowing asynchronous updates again with some delay.
For coordinate descent, the analogous approach was studied in \cite{Liu:2014wj}. %
Both methods \cite{Niu:2011wo,Liu:2014wj} are $H$ times less efficient in terms
of communication when compared to \algname, and are designed for \emph{multi-core} 
shared memory machines (where communication is as fast as memory access). 
They require the same amount of communication as \emph{naive distributed SGD / CD}, 
which we include in our experiments in Section~\ref{experiments}, and a 
slightly larger number of iterations due to the asynchronicity.
The $1/t$ convergence rate shown in 
\cite{Niu:2011wo} only holds under strong sparsity assumptions on the data.
A more recent paper \cite{Duchi:2013te} deepens the understanding of such 
methods, but still only applies to very sparse data.
For general data,~\cite{Balcan:2012tc} theoretically shows that
$1/\varepsilon^2$ communications rounds of single vectors are enough to
obtain $\varepsilon$-quality for linear classifiers, with the rate 
growing with~$K^2$ in the number of workers. Our new analysis here makes the
dependence on $1/\varepsilon$ logarithmic.
\vspace{-3mm}

\section{Experiments} \label{experiments}

In this section, we compare \algname to traditional mini-batch versions of
stochastic dual coordinate ascent and stochastic gradient descent, as well as
the locally-updating version of stochastic gradient descent. We implement
mini-batch SDCA (denoted mini-batch-CD) as described in
\cite{Takac:2013ut,Yang:2013vl}. The SGD-based methods are mini-batch and
locally-updating versions of Pegasos \cite{ShalevShwartz:2010cg}, differing
only in whether the primal vector is updated locally on each inner iteration
or not, and whether the resulting combination/communication of the updates is
by an average over the total size $KH$ of the mini-batch (mini-batch-SGD) or
just over the number of machines $K$ (local-SGD). 
For each algorithm, we additionally study the effect of scaling the
average by a parameter~$\beta_K$, as first described in~\cite{Takac:2013ut},
while noting that it is a benefit to avoid having to tune this data-dependent
parameter.

We apply these algorithms to standard hinge loss $\ell_2$-regularized 
support vector machines, using implementations written in
\textsf{\small Spark} on m1.large Amazon EC2 instances \cite{Zaharia:2012ve}.
Though this non-smooth case is not yet covered in our theoretical analysis,
we still see remarkable empirical performance. Our results 
indicate that \algname is able to converge to $.001$-accurate solutions
nearly~$25\times$ as fast compared the other algorithms, when all use 
$\beta_K=1$. 
The datasets used in these analyses are summarized in Table 
\ref{tab:datasets}, and were
distributed among $K= 4, 8$, and $32$ nodes, respectively. We use the same 
regularization parameters as specified in
\cite{ShalevShwartz:2010cg,Hsieh:2008bd}.

\begin{table}[h]
\caption{Datasets for Empirical Study \vspace{-1em}}
\label{tab:datasets}
   \begin{center}
      \begin{tabular}{l| r | %
      r | r | r | r}
       \vspace{.25em}
    {\small\textbf{Dataset}} & {\small\textbf{Training}} $(n)$ & %
    {\small\textbf{Features}} $(d)$ & {\small\textbf{Sparsity}} & {$\lambda$} & {\small\textbf{Workers}} $(K)$\\
    \hline
	cov & 522,911 & % 58,101 &
	  54 & 22.22\% & $1e$-6 & 4 \\
	rcv1 & 677,399 & % 20,242  &
	  47,236 & 0.16\% & $1e$-6 & 8 \\
	imagenet & 32,751 & % 5,426 &
	  160,000 & 100\%& $1e$-5 & 32 \\
      \end{tabular}
   \end{center}\vspace{-2mm}
\end{table}

In comparing each algorithm and dataset, we analyze progress in
primal objective value as a function of both time (Figure \ref{fig:BatchTime})
and communication (Figure \ref{fig:BatchComm}). For all competing methods,
we present the result for the batch size ($H$) that yields the best performance
in terms of reduction in objective value over time. For the
locally-updating methods (\algname and local-SGD), these tend to be
larger batch sizes corresponding to processing almost all of the
local data at each outer step. For the non-locally updating mini-batch
methods, (mini-batch SDCA \cite{Takac:2013ut} and mini-batch 
SGD~\cite{ShalevShwartz:2010cg}), these typically correspond to smaller
values of $H$, as averaging the solutions to guarantee safe convergence
becomes less of an impediment for smaller batch sizes.
\begin{figure}[H]
\begin{subfigure}{.33\textwidth}
\includegraphics[width=\linewidth]{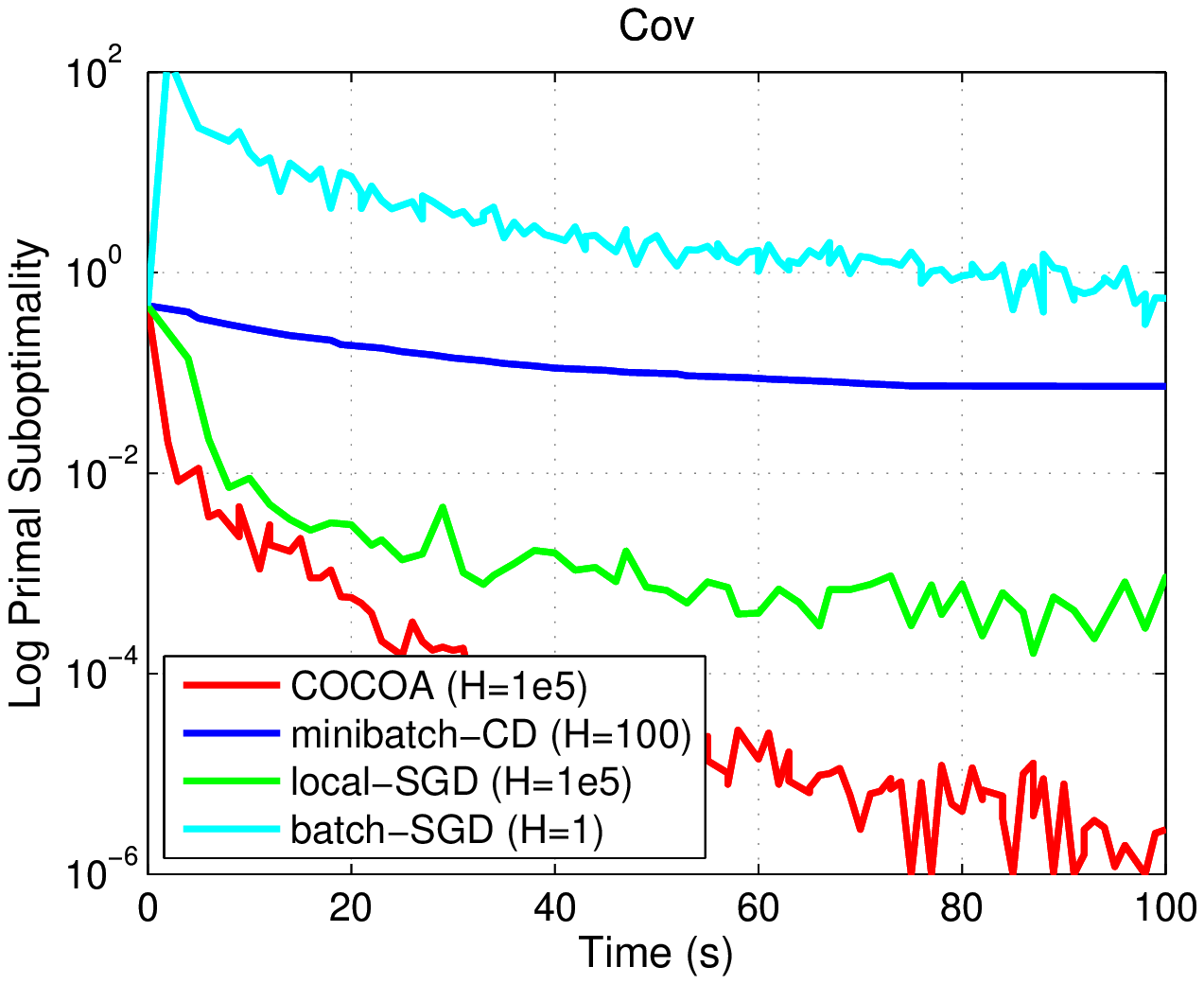}
\end{subfigure}
\begin{subfigure}{.33\textwidth}
\includegraphics[width=\linewidth]{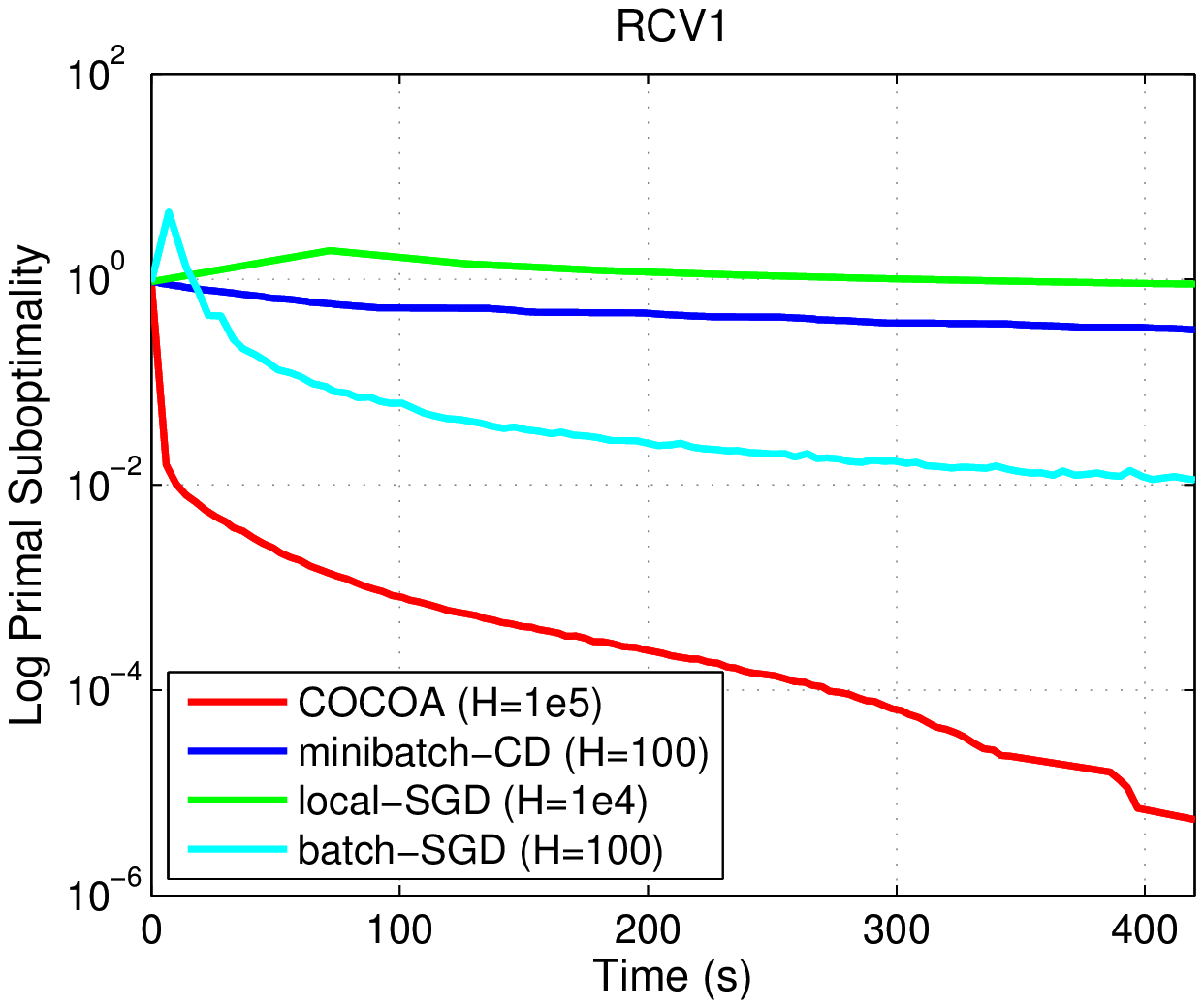}
\end{subfigure}
\begin{subfigure}{.33\textwidth}
\includegraphics[width=\linewidth]{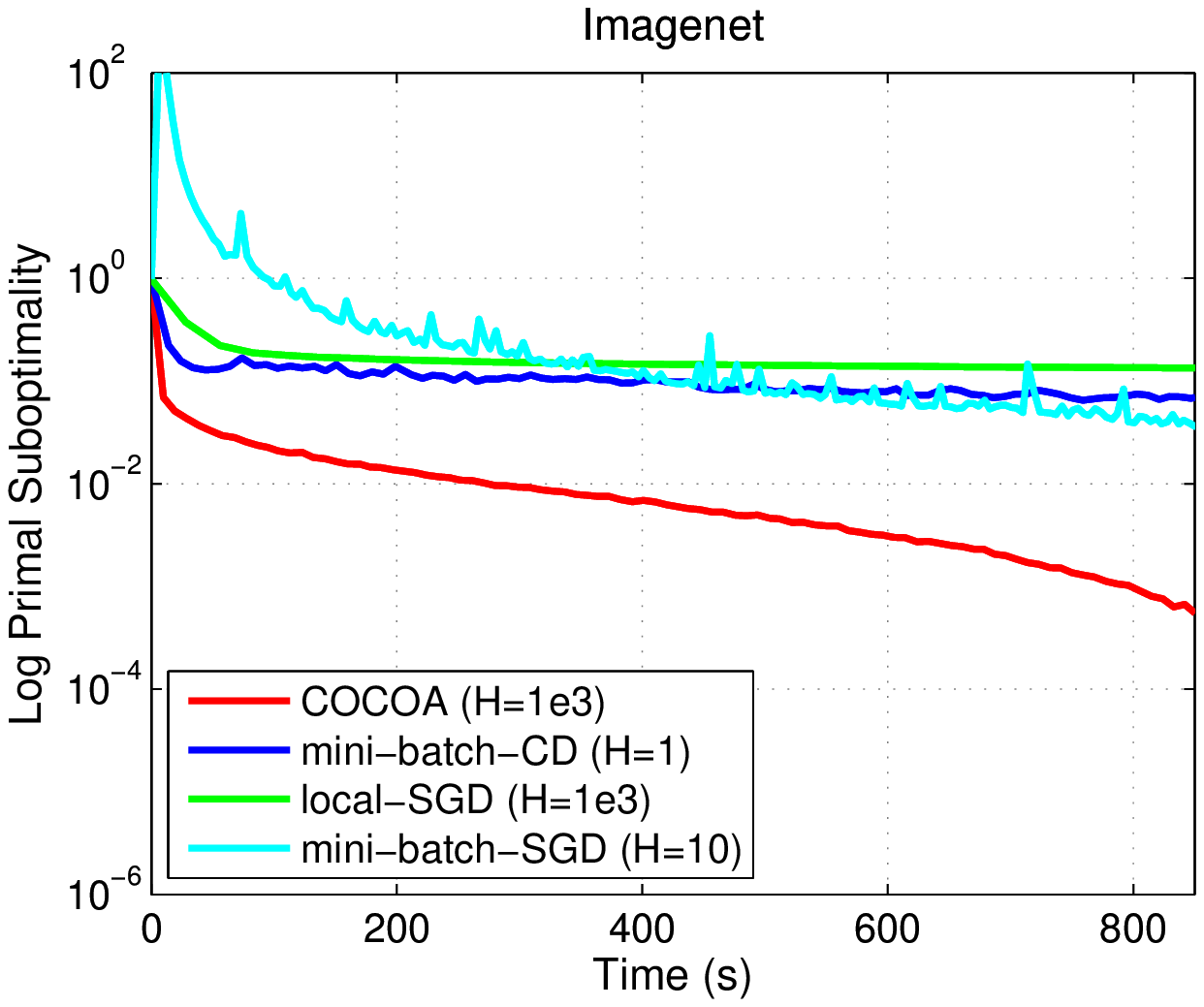}
\end{subfigure}\vspace{-1mm}
\caption{\small Primal Suboptimality vs. Time for Best Mini-Batch Sizes (H): 
For $\beta_K=1$, \algname converges more quickly than all other algorithms, 
even when accounting for different batch sizes.}
\label{fig:BatchTime}
\end{figure}
\vspace{-1em}
\begin{figure}[H]
\begin{subfigure}{.33\textwidth}
\includegraphics[width=\linewidth]{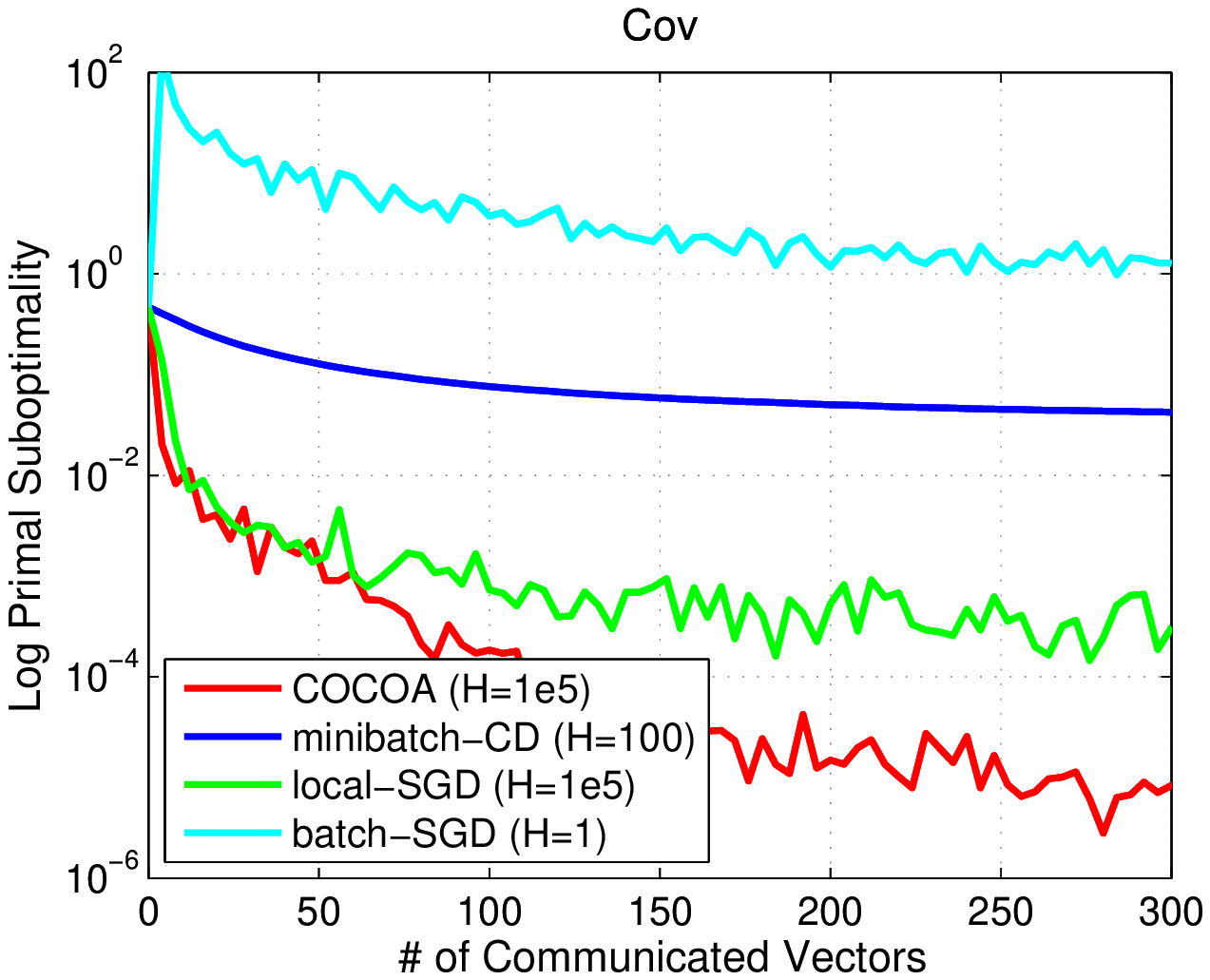}
\end{subfigure}
\begin{subfigure}{.33\textwidth}
\includegraphics[width=\linewidth]{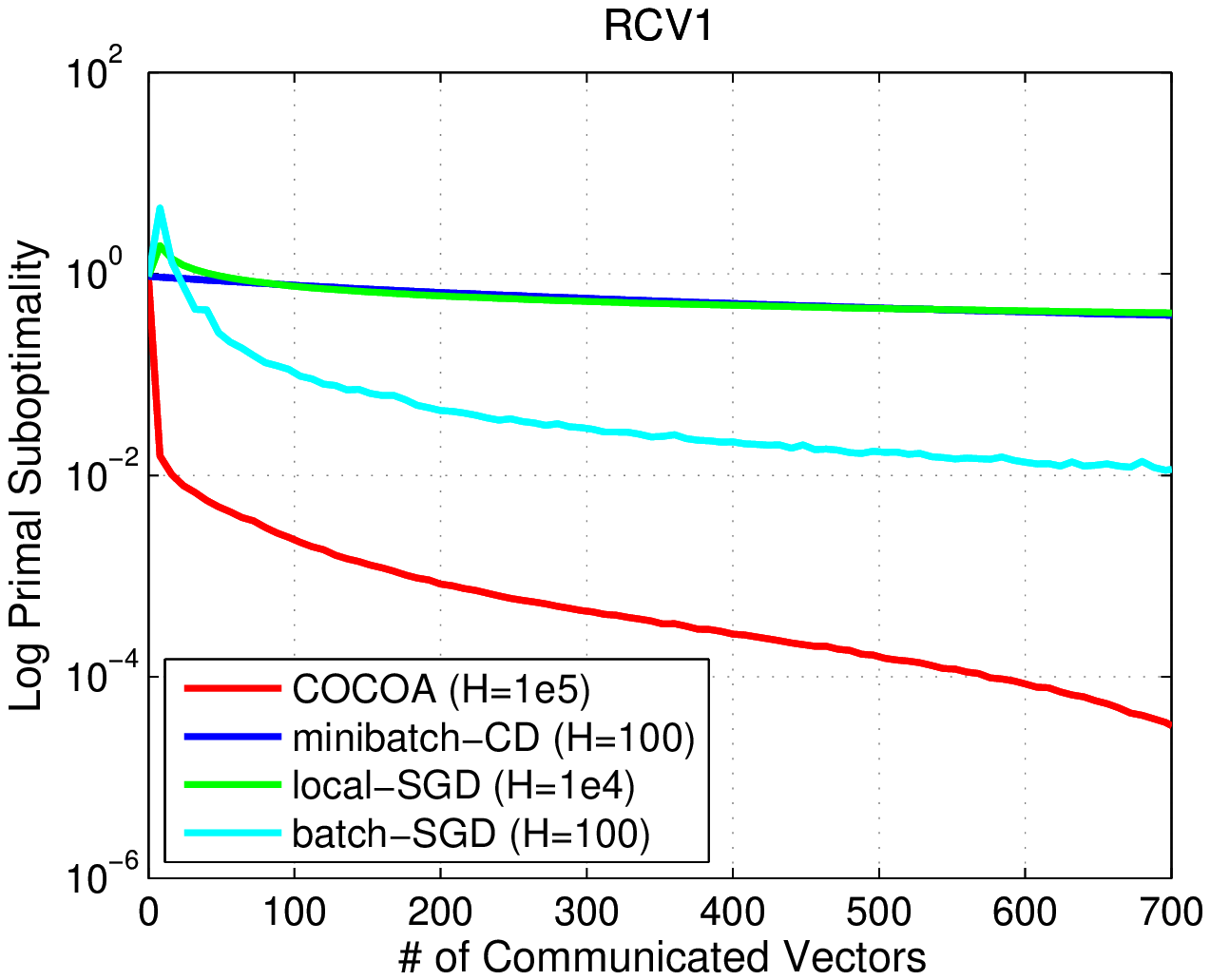}
\end{subfigure}
\begin{subfigure}{.33\textwidth}
\includegraphics[width=\linewidth]{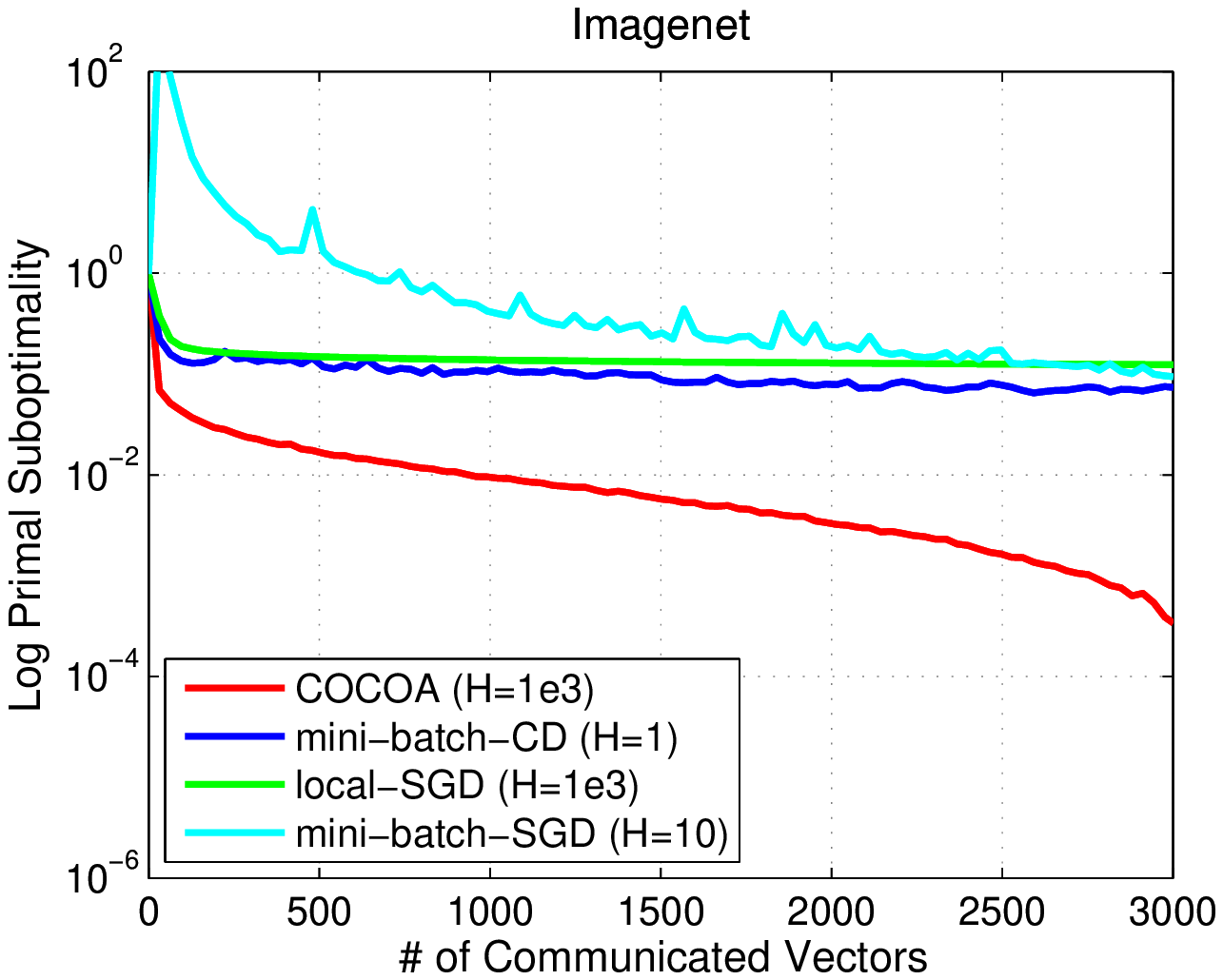}
\end{subfigure}\vspace{-1mm}
\caption{\small Primal Suboptimality vs. \# of Communicated Vectors for Best 
Mini-Batch Sizes (H): A clear correlation is evident between the number of 
communicated vectors and wall-time to convergence 
(Figure \ref{fig:BatchTime}).\vspace{-1mm}}
\label{fig:BatchComm}
\end{figure}

First, we note that there is a clear correlation between the wall-time
spent processing each dataset and the number of vectors communicated,
indicating that communication has a significant effect on convergence 
speed.
We see clearly that \algname is able to converge to a more accurate
solution in all datasets much faster than the other methods.
On average, \algname reaches a .001-accurate solution for these
datasets 25x faster than the best competitor. This is a testament
to the algorithm's ability to avoid communication while still making
significant global progress by efficiently combining the local updates
of each iteration.
The improvements are robust for both regimes $n\gg d$ and $n\ll d$.

\begin{figure}[h!]
\begin{minipage}{.33\textwidth}
\includegraphics[width=\linewidth]{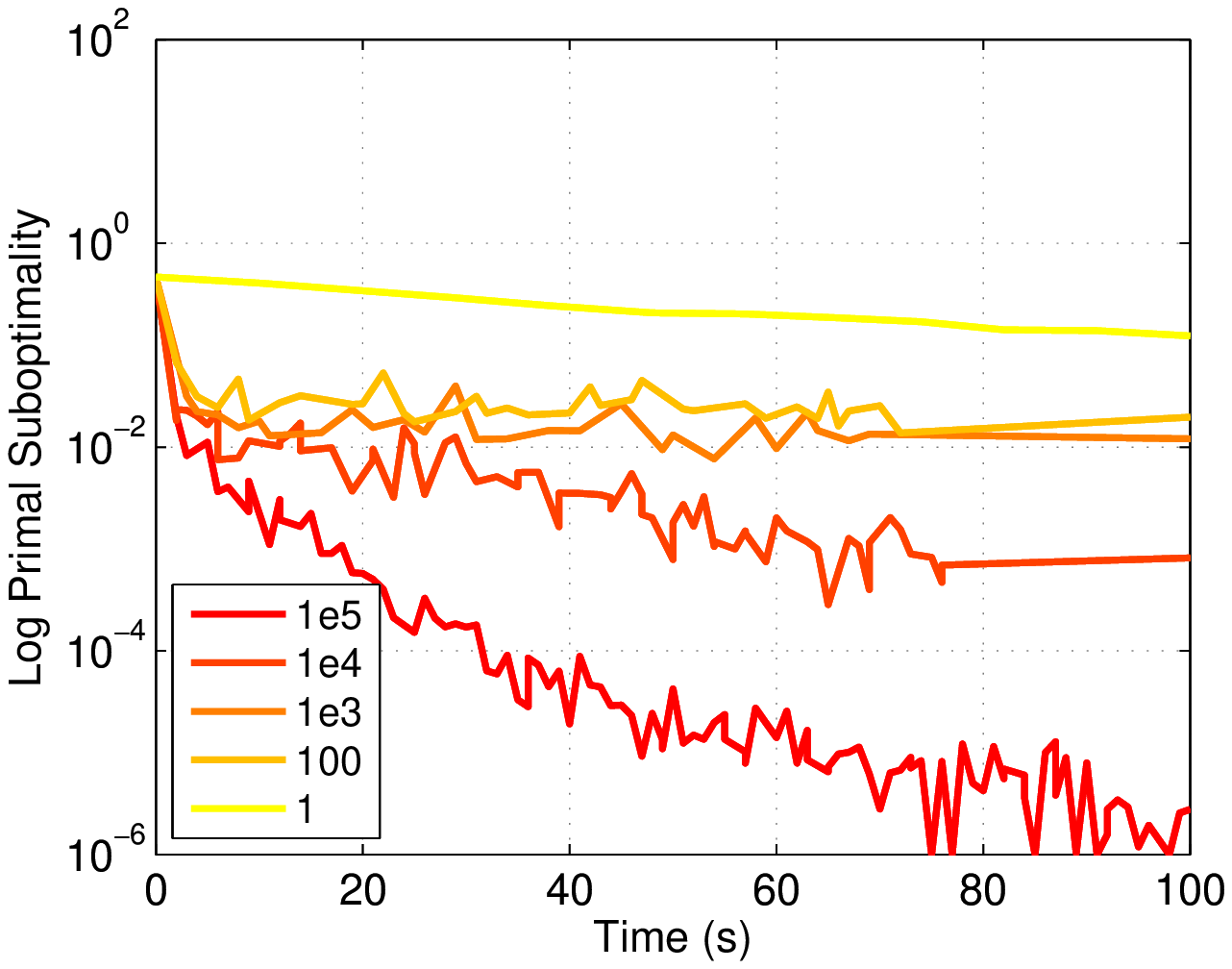}\vspace{-1mm}
\caption{\small Effect of $H$ on \algname.}
\label{fig:BatchEffect}
\end{minipage}\hfill
\begin{minipage}{.66\textwidth}
\includegraphics[width=.5\linewidth]{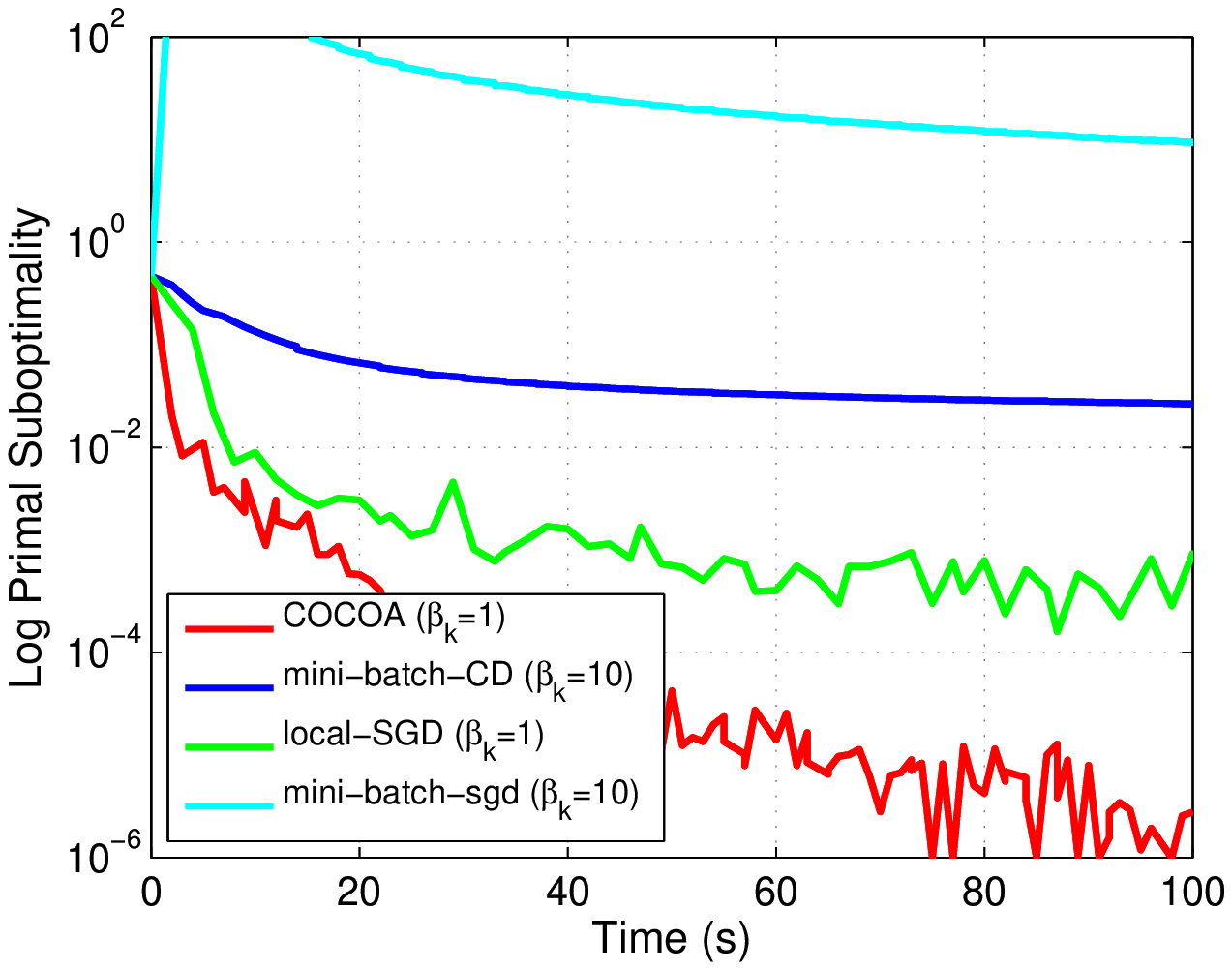}
\includegraphics[width=.5\linewidth]{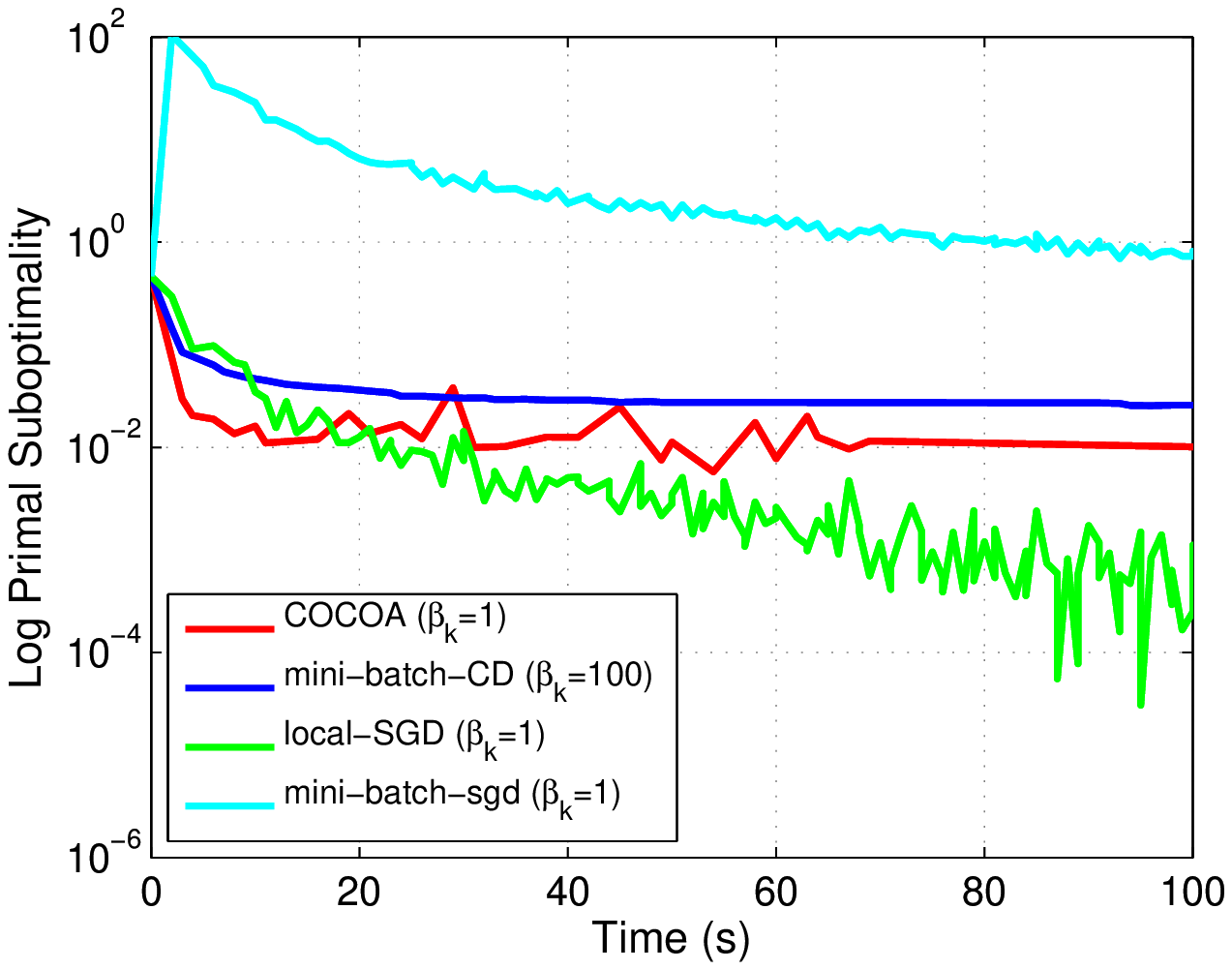}\vspace{-1mm}
\caption{Best $\beta_K$ Scaling Values for $H=1e5$ and $H=100$.}
\label{fig:BetaScaling}
\end{minipage}
\end{figure}

In Figure \ref{fig:BatchEffect} we explore the effect of $H$, 
the computation-communication trade-off factor, on the
convergence of \algname for the Cov dataset on a cluster of 4 nodes.
As described above, increasing~$H$ decreases communication 
but also affects the convergence properties of the algorithm. In 
Figure~\ref{fig:BetaScaling}, we attempt to scale the averaging step of each
algorithm by using various $\beta_K$ values, for two different batch sizes on
the Cov dataset ($H=1e5$ and $H=100$). We see that though $\beta_K$ has a
larger impact on the smaller batch size, it is still not enough to improve
the mini-batch algorithms beyond what is achieved by \algname and local-SGD.

\section{Conclusion} \label{conclusion}

We have presented a communication-efficient framework for distributed dual
coordinate ascent algorithms that can be used to solve large-scale regularized
loss minimization problems. This is crucial in settings where datasets must
be distributed across multiple machines, and where communication amongst nodes
is costly. We have shown that the proposed algorithm performs competitively on
real-world, large-scale distributed datasets, and have presented the first
theoretical analysis of this algorithm that achieves competitive convergence
rates without making additional assumptions on the data itself.

It remains open to obtain improved convergence rates for more aggressive
updates corresponding to $\beta_K > 1$, which might be suitable for using the
`safe' updates techniques of  \cite{Takac:2013ut} and the related expected
separable over-approximations of \cite{Richtarik:2014fe,Richtarik:2012vf}, here applied to $K$
instead of $n$ blocks.  Furthermore, it remains open to show convergence rates
for local SGD in the same communication efficient setting as described here.

\paragraph{Acknowledgments.}
We thank Shivaram Venkataraman, Ameet Talwalkar, and Peter Richt{\'a}rik for fruitful discussions. MJ acknowledges support by the Simons Institute for the Theory of Computing.

{\small
\bibliographystyle{unsrt} %
\bibliography{bibliography}
}

\newpage
\appendix

\section{Proof of Theorem \ref{thm:convergence} -- Main Convergence Theorem}

\newcommand{\vsub}[2]{#1_{\langle[#2]\rangle}} %

In the following, for a given vector $\av_{[k]} \in \R^{n_k}$, we write $\vsub{\av}{k} \in \R^n$ for its zero-padded version which coincides with $\av_{[k]}$ on the $k$-th coordinate block, and is zero everywhere else.

\begin{reptheorem}{thm:convergence}
Assume that Algorithm \ref{alg:cdca} is run for $T$ outer iterations on $K$ worker machines, 
with the procedure \localalgname having local geometric improvement $\Theta$, and let $\beta_K:=1$.
Further, assume the loss functions $\ell_i$ are $(1/\gamma)$-smooth.
Then the following geometric convergence rate holds for the global (dual) objective:
\begin{equation*}%
\E[D(\av^*) - D(\av^{(T)})]
\leq 
\left(1-(1-\Theta)
 \frac1K
  \frac{\lambda n \gamma }{\sigma
   + \lambda n \gamma }
\right)^T     \left(D(\av^*) - D(\av^{(0)})\right).
\end{equation*}
Here $\sigma$ is any real number satisfying
\begin{align*}%
 \sigma
\geq%
\sigma_{\min} := \max_{\av \in \R^n}
\lambda^2 n^2 %
\frac{ \textstyle{\sum}_{k=1}^K   
  \norm{ A_{[k]}   \av_{[k]} }^2  -
  \norm{ A \av }^2 
   }
   {\norm{ \av }^2   } \geq 0.
\end{align*}
\end{reptheorem}
\begin{proof}
From the definition of the update performed by Algorithm \ref{alg:cdca} (for the setting of $\beta_K:=1$), we have
$\av^{(t+1)}
  =
  \av^{(t)}
  + \frac1K \sum_{k=1}^K   \vsub{ \Delta \av }{k}$.
Let us estimate the change of objective function after one outer iteration.
  Then using concavity of $D$ we have
\begin{align*}
D(\av^{(t+1)})
 &=
 D\left(\av^{(t)}
  + \tfrac1K \textstyle{\sum}_{k=1}^K 
      \vsub{\Delta \av}{k}
     \right)
  =
  D\left(\tfrac1K \textstyle{\sum}_{k=1}^K (
     \av^{(t)} + \vsub{\Delta \av}{k} )
        \right)
\\&\geq
 \tfrac1K \textstyle{\sum}_{k=1}^K D(
    \av^{(t)} + \vsub{\Delta \av}{k} ).
\end{align*}
Subtracting $D(\av^{(t)})$ from both sides and denoting by
$\hat \av^*_{[k]}$ the local maximizer as in \eqref{eq:suboptimality}
we obtain
\begin{align*}
D(\av^{(t+1)})-D(\av^{(t)})
 &\geq
 \tfrac1K \textstyle{\sum}_{k=1}^K
 \left[ D(
    \av^{(t)} + \vsub{\Delta \av}{k}
      )
    -D(\av^{(t)}) \right]
\\
 &=
 \tfrac1K \textstyle{\sum}_{k=1}^K
 \left[ D(
    \av^{(t)} + \vsub{\Delta \av}{k}
      )
      -D((\av_{[t]}^{(1)},\dots,\hat\av^*_{[k]},\dots,\av_{[t]}^{(K)})) 
      \right.
      \\ &\quad \left.
      +D((\av_{[t]}^{(1)},\dots,\hat\av^*_{[k]},\dots,\av_{[t]}^{(K)})) 
    -D(\av^{(t)}) \right]
\\
&\overset{\eqref{eq:suboptimality}}{=}
 \tfrac1K \textstyle{\sum}_{k=1}^K
 \left[
 \suboptlocal(\av^{(t)})
 - \suboptlocal(\av^{(t)}+   \Delta\vsub{\av}{k} )
 \right].
\end{align*}
Considering the expectation of this quantity, we are now ready to use Assumption \ref{asm:localImprobvment} on the \emph{local geometric improvement} of the inner procedure. We have
\begin{align*}
\E[D(\av^{(t+1)})-D(\av^{(t)})\,|\,\av^{(t)}]
 &\geq
 \tfrac1K \textstyle{\sum}_{k=1}^K
 \E[
  \suboptlocal(\av^{(t)})
 - \suboptlocal(\av^{(t)}+   \Delta\vsub{\av}{k} )
 \,|\, \av^{(t)} ]
\\
&\overset{\eqref{eq:localGeometricRate}}{\geq}
 \tfrac1K  (1-\Theta) 
 \textstyle{\sum}_{k=1}^K
  \suboptlocal(\av^{(t)}) .
\end{align*}
It remains to bound 
$\textstyle{\sum}_{k=1}^K \suboptlocal(\av^{(t)})$.
\begin{align*}
\textstyle{\sum}_{k=1}^K 
  \suboptlocal(\av^{(t)})
&\overset{\eqref{eq:suboptimality}}{=}  
 \max_{ \hat \av \in \R^{n}} 
 \left\{ \textstyle{\sum}_{k=1}^K   
\left[D((\av_{[1]},\dots,\hat \av_{[k]},\dots,
\av_{[K]})) 
- D((\av_{[1]},\dots,\av_{[k]},\dots,
\av_{[K]}))
\right] \right\}
\\
&\overset{\eqref{eq:sdcaDual}}{=}
 \max_{ \hat \av \in \R^{n}} 
 \left\{
  \tfrac1n \textstyle{\sum}_{i=1}^n
 (-  \ell_i^*(-\hat \alpha_i)  
+ \ell_i^*(- \alpha^{(t)}_i)  )    
 \right. \\ &\quad\quad\quad \left. 
 +
\tfrac{\lambda}{2}  \textstyle{\sum}_{k=1}^K   
\left[
 -  \norm{ A\av^{(t)} + A_{[k]} (\hat \av_{[k]} - \av^{(t)}_{[k]}) }^2   
+  \norm{ A \av^{(t)} }^2
\right] \right\}
\\
&=
 \max_{ \hat \av \in \R^{n}} 
 \left\{
 D(\hat \av)
 -D(\av^{(t)})
+\tfrac\lambda 2 \|A \hat\av\|^2
-\tfrac\lambda 2 \|A \av^{(t)}\|^2
 \right. \\ &\quad\quad\quad \left.
 +
\tfrac{\lambda}{2}  \textstyle{\sum}_{k=1}^K   
\left[
 -  \norm{ A\av^{(t)} + A_{[k]} (\hat \av_{[k]} - \av^{(t)}_{[k]}) }^2   
+  \norm{ A \av^{(t)} }^2
\right] \right\}
\\
&=
 \max_{ \hat \av \in \R^{n}} 
 \left\{
 D(\hat \av)
 -D(\av^{(t)})
+\tfrac\lambda 2
\left[
 \|A \hat\av\|^2
- \|A \av^{(t)}\|^2
\right]\right.
\\&\quad\quad\quad 
\left.
+\tfrac\lambda 2
\left[
 -  
  2 ( A\av^{(t)} )^T  A ( \hat \av -  \av^{(t)} ) 
-   \textstyle{\sum}_{k=1}^K   
  \norm{  A_{[k]} (\hat \av_{[k]} - \av^{(t)}_{[k]}) }^2   
  \right]
 \right\}
\end{align*}
\begin{align*}
&=%
 \max_{ \hat \av \in \R^{n}} 
 \left\{
 D(\hat \av)
 -D(\av^{(t)})
+\tfrac\lambda 2
\left[
 \|A \hat\av\|^2
+ \|A \av^{(t)}\|^2
 -  
  2 ( A\av^{(t)} )^T  A \hat \av 
\right]\right.
\\&\quad\quad\quad 
\left.
+\tfrac\lambda 2
\left[
-   \textstyle{\sum}_{k=1}^K   
  \norm{  A_{[k]} (\hat \av_{[k]} - \av^{(t)}_{[k]}) }^2   
  \right]
 \right\}
 \\
&=%
 \max_{ \hat \av \in \R^{n}} 
 \left\{
 D(\hat \av)
 -D(\av^{(t)})
+\tfrac\lambda 2
\left[
  \|A (\av^{(t)} -\hat\av  )\|^2
\right]\right.
\\&\quad\quad\quad 
\left.
+\tfrac\lambda 2
\left[
-   \textstyle{\sum}_{k=1}^K   
  \norm{  A_{[k]} (\hat \av_{[k]} - \av^{(t)}_{[k]}) }^2   
  \right]
 \right\}
\\
&\geq%
 \max_{ \hat \av \in \R^{n}} 
 \left\{
 D(\hat \av)
 -D(\av^{(t)})
-\tfrac \sigma {2 \lambda n^2}
  \norm{    \hat \av -  \av^{(t)}   }^2   
 \right\},
\end{align*}
by the definition \eqref{eq:sigma} of the complexity parameter $\sigma$.
Now, we can conclude the bound as follows:
\begin{align*}
\textstyle{\sum}_{k=1}^K 
  \suboptlocal(\av^{(t)})
&\geq
  \max_{ \hat \av \in \R^{n}} 
 \left\{
 D(\hat \av)
 -D(\av^{(t)})
-\tfrac \sigma {2 \lambda n^2}
  \norm{    \hat \av -  \av^{(t)}   }^2   
 \right\}
\\%
&\geq
 \max_{\eta \in [0,1]}
 \left\{
 D(\eta \av^* + (1-\eta)\av^{(t)})-D(\av^{(t)})
  - \frac{1}{2}
   \frac {\sigma \eta^2} {\lambda n^2}
  \|  \av^* -\av^{(t)}\|^2
 \right\}
\\
&\geq
 \max_{\eta \in [0,1]}
 \Big\{
 \eta D( \av^* )
+
 (1-\eta) D(\av^{(t)})-D(\av^{(t)})\\
 & \hspace{2cm}
+
 \frac{\gamma \eta (1-\eta)}{2n} \|\av^* - \av^{(t)}  \|^2
  - \frac{1}{2}
   \frac {\sigma \eta^2} {\lambda n^2}
  \|  \av^* -\av^{(t)}\|^2
 \Big\}
\\
&\geq
 \max_{\eta \in [0,1]}
 \left\{
 \eta (D( \av^* )
- D(\av^{(t)}))
+
\frac \eta{2n}
\left(
 \gamma (1-\eta)
 -
   \frac {\sigma \eta} {\lambda n}
\right)
  \|\av^* - \av^{(t)}  \|^2
 \right\}.
\end{align*}
Choosing
$
  \eta^*
   :=\frac{\lambda n \gamma }{\sigma
   + \lambda n \gamma } \in [0,1]
$
gives
\begin{align*}
\textstyle{\sum}_{k=1}^K 
  \suboptlocal(\av^{(t)}) 
  \geq
 \frac{\lambda n \gamma }{\sigma
   + \lambda n \gamma } (D( \av^* )
- D(\av^{(t)})).
\end{align*}
Therefore, we have
\begin{eqnarray*}
\E[D(\av^{(t+1)})-D(\av^*)+D(\av^*)-D(\av^{(t)}) \,|\, \av^{(t)}]
 &\geq&
(1-\Theta)
 \frac1K
  \frac{\lambda n \gamma }{\sigma
   + \lambda n \gamma } (D( \av^* )
- D(\av^{(t)})),
\\
\E[D(\av^*)-D(\av^{(t+1)}) \,|\, \av^{(t)}]
 &\leq&
\left[1-(1-\Theta)
 \frac1K
  \frac{\lambda n \gamma }{\sigma
   + \lambda n \gamma }
\right]   
    (D( \av^* )
- D(\av^{(t)})).
\end{eqnarray*}
This implies the claim \eqref{eq:convergenceResult} of the theorem.
\end{proof}

\section{Proof of Proposition \ref{prop:smoothSDCApower} -- Local Improvement of the Inner Optimizer}

\subsection{Decomposition of the Duality Structure over the Blocks of Coordinates}
The core concept in our new analysis technique is the following primal-dual structure on each local coordinate block.
Using this, we will show below that all steps of \localalgname can be interpreted as performing coordinate ascent steps on the \emph{global} dual objective function $D(\av)$, with the coordinates changes restricted to the $k$-th block, as follows.

For concise notation, we write $\{\mathcal{I}_k\}_{k=1}^K$ for the partition of data indices $\{1,2,\dots,n\}$ between the workers, such that $|\mathcal{I}_k|=n_k$. In other words, each set $\mathcal{I}_k$ consists of the indices of the $k$-th block of coordinates (those with the dual variables $\av_{[k]}$ and datapoints $A_{[k]}$, as available on the $k$-th worker).

Let us define the \emph{local dual problem} as
\begin{equation}\label{eq:localDual}
\max_{\av_{[k]} \in\R^{n_k}}    \Big[
    \ \  D_k(\av_{[k]};\overline\wv) 
    :=\ - \frac{\lambda}{2} \norm{ \overline\wv + A_{[k]} \av_{[k]} }^2
      - \frac1n \sum_{i\in\mathcal{I}_k} \ell_i^*(-\alpha_i) 
      + \frac \lambda 2 \| \overline\wv\|^2
     \ \ \Big] .
\end{equation}
The definition is valid for any fixed vector $\overline\wv$. Here the reader should think of  $\overline\wv$ as representing the status of those dual variables $\av_i$ which are \emph{not} part of the active block $k$. The idea is the following:
For the choice of $\overline\wv := \sum_{k'\ne k} A_{[k']}\av^{(0)}_{[k']}$, it turns out that the local dual objective, as a function of the local variables $\av_{[k]}$, is identical the global (dual) objective, up to a constant independent of $\av_{[k]}$, or formally
\[
D_k(\av_{[k]};\overline\wv) = D\big(\av^{(0)}_{[1]},\dots,\av_{[k]},\dots,\av^{(0)}_{[K]}\big) + C'
    ~~~~~~ \forall\ \av_{[k]} \in\R^{n_k} \ .
\]

The following proposition observes that the defined local problem has a very similar duality structure as the original problem~\eqref{eq:sdcaPrimal} with its dual \eqref{eq:sdcaDual}.
\begin{proposition}
For $\overline\wv\in\R^d$, let us define the ``local'' primal problem on the $k$-th block as
\begin{equation}\label{eq:localPrimal}
\min_{\wv_k \in\R^d} \quad \Big[ \ \ P_k(\wv_k;\overline\wv) := \frac1{n} 
                       \sum_{i\in\mathcal{I}_k}
                       \ell_i( (\overline\wv+\wv_k)^T \x_i )  
                       + \frac{\lambda}{2}\norm{\wv_k}^2
                       \ \ \Big] .
\end{equation}
Then the dual of this formulation (with respect to the variable $\wv_k$) is given by the local dual problem~\eqref{eq:localDual} for the $k$-th coordinate block.
\end{proposition}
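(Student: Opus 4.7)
My plan is to derive the dual of \eqref{eq:localPrimal} via standard Fenchel/Lagrangian duality, mirroring the derivation of \eqref{eq:sdcaDual} from \eqref{eq:sdcaPrimal} in \cite{ShalevShwartz:2013wl}, and then verify by direct algebra that the resulting expression matches $D_k(\av_{[k]};\overline\wv)$ as given in \eqref{eq:localDual}.

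First, I would perform the change of variables $\wv' := \overline\wv + \wv_k$, so that \eqref{eq:localPrimal} becomes $\min_{\wv'} \ \frac1n \sum_{i\in\mathcal{I}_k} \ell_i(\wv'^T \x_i) + \frac{\lambda}{2} \|\wv' - \overline\wv\|^2$. This puts the loss term in exactly the same form as in \eqref{eq:sdcaPrimal}, but restricted to indices in $\mathcal{I}_k$, and only shifts the regularizer by the fixed vector $\overline\wv$. Next I would replace each $\ell_i(\wv'^T \x_i)$ by its Fenchel biconjugate representation $\sup_{\alpha_i}\ -\alpha_i \wv'^T \x_i - \ell_i^*(-\alpha_i)$ (using $-\alpha_i$ as the dual variable, matching the sign convention of the paper). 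Swapping the resulting $\inf_{\wv'}$ with $\sup_{\av_{[k]}}$ is justified by standard strong duality since the objective is convex in $\wv'$, concave in $\av_{[k]}$, and coercive in $\wv'$.

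Then I would solve the inner minimization in closed form: setting the gradient in $\wv'$ to zero yields $\wv'^{\star} = \overline\wv + \frac{1}{\lambda n}\sum_{i\in\mathcal{I}_k}\alpha_i \x_i = \overline\wv + A_{[k]}\av_{[k]}$, using the normalization $A_i = \frac{1}{\lambda n}\x_i$. Substituting $\wv'^{\star}$ back into the saddle expression leaves
\[
-\lambda\, \overline\wv^T A_{[k]} \av_{[k]} \;-\; \tfrac{\lambda}{2}\|A_{[k]}\av_{[k]}\|^2 \;-\; \tfrac{1}{n}\sum_{i\in\mathcal{I}_k} \ell_i^*(-\alpha_i),
\]
after canceling cross-terms coming from the shifted regularizer $\frac{\lambda}{2}\|\wv'-\overline\wv\|^2$.

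Finally I would verify that this is exactly $D_k(\av_{[k]};\overline\wv)$ by expanding $-\tfrac{\lambda}{2}\|\overline\wv + A_{[k]}\av_{[k]}\|^2 + \tfrac{\lambda}{2}\|\overline\wv\|^2$ and observing that the $\|\overline\wv\|^2$ terms cancel, leaving the cross term $-\lambda \overline\wv^T A_{[k]}\av_{[k]}$ and the quadratic $-\tfrac{\lambda}{2}\|A_{[k]}\av_{[k]}\|^2$. The only substantive step is the swap of $\inf$ and $\sup$; once that is granted, the rest is algebra that parallels the derivation of the global dual in \cite{ShalevShwartz:2013wl}, with the sole modification being the additive shift $\overline\wv$ in the regularizer, which produces exactly the linear-in-$\overline\wv$ coupling term and the constant $\tfrac{\lambda}{2}\|\overline\wv\|^2$ appearing in \eqref{eq:localDual}. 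I expect the main obstacle to be bookkeeping of signs and of the $1/n$, $1/(\lambda n)$ factors — there is no conceptual difficulty beyond recognizing that \eqref{eq:localPrimal} is a translated copy of the global problem restricted to $\mathcal{I}_k$.
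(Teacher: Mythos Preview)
Your proposal is correct and follows essentially the same route as the paper's proof: both replace each $\ell_i$ by its Fenchel representation $\sup_{\alpha_i}\{-\alpha_i(\cdot)-\ell_i^*(-\alpha_i)\}$, swap $\min$ and $\max$, solve the inner quadratic minimization in closed form (yielding $\wv_k^\star = A_{[k]}\av_{[k]}$, equivalently $\wv'^\star = \overline\wv + A_{[k]}\av_{[k]}$), and then verify the resulting expression matches \eqref{eq:localDual}. The only cosmetic difference is that the paper works directly in the variable $\wv_k$ while you first shift to $\wv' = \overline\wv + \wv_k$; this changes nothing of substance.
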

\begin{proof}
The dual of this problem is derived by plugging in the definition of the conjugate function $\ell_i( (\overline\wv+\wv_k)^T \x_i ) = \max_{\alpha_i} -\alpha_i (\overline\wv+\wv_k)^T \x_i - \ell^*_i(-\alpha_i)$, which gives
\begin{align*}
 & \min_{\wv_k \in\R^d} \quad \frac1{n} 
                       \sum_{i\in\mathcal{I}_k} \max_{\alpha_i} \big( -\alpha_i (\overline\wv+\wv_k)^T \x_i - \ell^*_i(-\alpha_i) \big)  
                       + \frac{\lambda}{2}\norm{\wv_k}^2    \\
= \ &  \frac1{n}   \sum_{i\in\mathcal{I}_k} \max_{\alpha_i} \ - \ell^*_i(-\alpha_i) + \min_{\wv_k \in\R^d} \left[ -\alpha_i (\overline\wv+\wv_k)^T \x_i
                       + \frac{\lambda}{2}\norm{\wv_k}^2  \right]    \\
= \ &  \max_{\av\in\R^{n_k}}\  \frac1{n}   \sum_{i\in\mathcal{I}_k} - \ell^*_i(-\alpha_i) + \min_{\wv_k \in\R^d} \left[ - \frac1{n} \sum_{i\in\mathcal{I}_k} \alpha_i (\overline\wv+\wv_k)^T \x_i
                       + \frac{\lambda}{2}\norm{\wv_k}^2  \right] .
\end{align*}
The first-order optimality condition for $\wv_k$, by setting its derivative to zero in the inner minimization, can be written as
\[
\wv_k^* =  \frac1{\lambda n}   \sum_{i\in\mathcal{I}_k} \alpha_i \x_i  = A_{[k]} \av_{[k]}
\]
plugging this back in, we have that the inner minimization becomes
\begin{align*}
 & -\lambda \frac1{\lambda n}   \sum_{i\in\mathcal{I}_k} \alpha_i (\overline\wv+\wv_k^*)^T \x_i   + \frac{\lambda}{2}\norm{\wv_k^* }^2  \\
=& -\lambda (\overline\wv+\wv_k^*)^T\wv_k^* + \frac{\lambda}{2}\norm{\wv_k^* }^2
\end{align*}
Writing the resulting full problem, we obtain precisely the local dual problem \eqref{eq:localDual} for the $k$-th coordinate block.
\end{proof}

Using these local subproblems, we have the following nice structure of local and global duality gaps: \\
\[
g_k(\av) := P_k(\wv_k;\overline\wv)-D_k(\av_{[k]};\overline\wv)
~~~~~~~~\text{ and }~~~~~~~~
g(\av) := P(\wv(\av))-D(\av)
\] \\
Here the contributions of all blocks except the active one are collected in $\overline\wv := \wv - \wv_k$. Recall that we defined $\wv_k := A_{[k]}\av_{[k]}$ and $\wv := A\av$, so that $\wv = \sum_{k=1}^K \wv_k$.

\subsection{Local Convergence of \localSDCA}
This section is just a small modification of results
obtained in \cite{ShalevShwartz:2013wl}.

Observe that the coordinate step performed by an iteration of 
\localSDCA can equivalently be written as
\begin{align}\label{eq:updateRule}
 \Delta\alpha_i^* = \Delta\alpha_i^*\big( \av_{[k]}^{(h-1)}, \overline\wv\big)
 &:=\argmax_{\Delta\alpha_i}
 \left(
 D_k(\av_{[k]}^{(h-1)}+e_i \Delta\alpha_i; \overline\wv )
 \right).
\end{align}
For a vector $\overline\wv\in\R^d$.
In other words, the step will optimize one of the local coordinates
with respect to the local dual objective, which is identical to the global
dual objective when the coordinates of the other blocks are kept fixed,
as we have seen in the previous subsection.

\begin{lemma}\label{lem:sdcaStep}
Assume that $\ell^*_i$ is $\gamma$-strongly convex (where $\gamma\geq0$). Then for all iterations $h$ of \localSDCA and any $s\in[0,1]$ we have
\begin{equation}\label{eq:asfavfawefvgwgea}
\E[ D_k (  \av_{[k]}^{(h)}; \overline\wv)
 -
 D_k (  \av_{[k]}^{(h-1)}; \overline\wv)]
 \ \geq\  \frac{s}{n_k}
 \big(P_k (  \wv_k^{(h-1)}; \overline\wv)
 -
  D_k (  \av_{[k]}^{(h-1)}; \overline\wv) \big)
   -
\frac{s^2}{2\lambda n^2}
G^{(h)},
\end{equation}
 where
$
G^{(h)} := \frac{1}{n_k}
 \sum_{i \in \mathcal{I}_k}
 \left(
  \|     \x_i\|^2 -
   \frac{ \lambda n  \gamma(1-s)}{s}
 \right)
  (u_i^{(h)}- \alpha_i^{(h-1)})^2, $~
    $-u_i^{(h-1)} \in \partial \ell_i(\x_i^T \wv^{(h-1)})$
  and
  $\wv^{(h-1)} = \overline\wv + A_{[k]}\av_{[k]}^{(h-1)}  $.
\end{lemma}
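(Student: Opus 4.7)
The plan is to follow the classical per-iteration progress argument for SDCA (Shalev-Shwartz--Zhang, 2013), but now applied to the \emph{local} dual $D_k(\cdot;\overline\wv)$ defined in \eqref{eq:localDual}. The key setup is already done by the previous subsection: $D_k(\cdot;\overline\wv)$ has exactly the same primal-dual structure, with respect to the block indices $\mathcal{I}_k$, as the global problem has with respect to $\{1,\ldots,n\}$. The crucial algorithmic fact is that \localSDCA picks $i\in\mathcal{I}_k$ uniformly at random and performs an \emph{exact} coordinate maximization of $D_k$ in the $i$-th coordinate, cf.\ \eqref{eq:updateRule}. Hence, for any candidate scalar $\Delta\alpha_i\in\R$,
$$D_k(\av_{[k]}^{(h-1)} + e_i\Delta\alpha_i^*;\overline\wv) - D_k(\av_{[k]}^{(h-1)};\overline\wv) \;\ge\; D_k(\av_{[k]}^{(h-1)} + e_i\Delta\alpha_i;\overline\wv) - D_k(\av_{[k]}^{(h-1)};\overline\wv).$$
I would therefore lower bound the left side by picking one convenient test update, rather than trying to analyze the maximizer directly.

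The natural choice is $\Delta\alpha_i := s\,(u_i^{(h-1)}-\alpha_i^{(h-1)})$ with $s\in[0,1]$ and $u_i^{(h-1)}$ as in the statement, so that $-(\alpha_i+\Delta\alpha_i) = (1-s)(-\alpha_i) + s(-u_i)$ is a convex combination. I would then expand $D_k(\av_{[k]}^{(h-1)}+e_i\Delta\alpha_i;\overline\wv) - D_k(\av_{[k]}^{(h-1)};\overline\wv)$ explicitly from the definition of $D_k$. The quadratic part contributes $-\tfrac{\Delta\alpha_i}{n}\,\x_i^T\wv^{(h-1)} - \tfrac{(\Delta\alpha_i)^2}{2\lambda n^2}\|\x_i\|^2$, while for the $-\tfrac1n\ell_i^*(-\alpha_i)$ part I would apply the $\gamma$-strong convexity of $\ell_i^*$ (equivalent to $(1/\gamma)$-smoothness of $\ell_i$) to the above convex combination, which produces the bonus term $\tfrac{\gamma}{2n}s(1-s)(u_i-\alpha_i)^2$.

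Next I would use the Fenchel--Young equality $\ell_i^*(-u_i^{(h-1)}) + \ell_i(\x_i^T\wv^{(h-1)}) = -u_i^{(h-1)}\x_i^T\wv^{(h-1)}$, which holds precisely because $-u_i^{(h-1)}\in\partial\ell_i(\x_i^T\wv^{(h-1)})$. Collecting the coefficients of $s$ after this substitution, the linear-in-$s$ part reduces exactly to the per-example gap contribution $\tfrac{s}{n}\bigl[\ell_i(\x_i^T\wv^{(h-1)}) + \ell_i^*(-\alpha_i^{(h-1)}) + \alpha_i^{(h-1)}\x_i^T\wv^{(h-1)}\bigr]$, and the quadratic-in-$s$ terms combine into $-\tfrac{s^2}{2\lambda n^2}\bigl(\|\x_i\|^2 - \tfrac{\lambda n\gamma(1-s)}{s}\bigr)(u_i^{(h-1)}-\alpha_i^{(h-1)})^2$.

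Finally I would take expectation over the uniformly chosen $i\in\mathcal{I}_k$. On the $s$-term, averaging over $n_k$ indices gives $\tfrac{s}{n_k}\cdot\tfrac1n\sum_{i\in\mathcal{I}_k}[\ldots] = \tfrac{s}{n_k}(P_k(\wv_k^{(h-1)};\overline\wv) - D_k(\av_{[k]}^{(h-1)};\overline\wv))$, using the gap decomposition of the preceding subsection. On the $s^2$-term, averaging produces precisely $\tfrac{s^2}{2\lambda n^2}G^{(h)}$, matching \eqref{eq:asfavfawefvgwgea}. The only real bookkeeping obstacle is keeping the normalization factors straight ($A_i = \tfrac{1}{\lambda n}\x_i$, summation over $\mathcal{I}_k$ rather than over all $n$ examples, and the probability $1/n_k$ of selecting a given local index); once the local duality structure from the previous subsection is invoked, the argument is a faithful transcription of the standard SDCA per-step bound.
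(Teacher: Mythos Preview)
Your proposal is correct and follows essentially the same route as the paper: lower-bound the exact coordinate maximization by the test update $\Delta\alpha_i = s(u_i-\alpha_i)$, apply $\gamma$-strong convexity of $\ell_i^*$ to the resulting convex combination, use the Fenchel--Young equality at $-u_i\in\partial\ell_i(\x_i^T\wv^{(h-1)})$ to produce the per-example gap term, and then take expectation over the uniform choice of $i\in\mathcal{I}_k$. The bookkeeping you flag (the $1/n$ in $A_i$, the $1/n_k$ sampling probability, and the identity $\tfrac1n\sum_{i\in\mathcal{I}_k}[\cdots]=P_k-D_k$) is exactly what the paper tracks as well.
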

\begin{proof} %
\begin{align*}
&n \left[ D_k (  \av_{[k]}^{(h)}; \overline\wv)
 -
 D_k (  \av_{[k]}^{(h-1)}; \overline\wv) \right] \\
=&
\underbrace{-
     \ell^*_i( - \alpha_i^{(h)})
    - \frac{\lambda n}{2}
    \|\wv^{(h)}\|^2}_{A}
-
\underbrace{\left(
-
     \ell^*_i( -\alpha_i^{(h-1)})
    - \frac{\lambda n}{2}
    \|\wv^{(h-1)}\|^2
\right)}_{B}.
\end{align*}
By the definition of the update \eqref{eq:updateRule} we have for all $s\in[0,1]$
that
\begin{align*}
A&=
\max_{\Delta\alpha_i}
 -\ell_i^*(-\alpha_i^{(h-1)}-\Delta\alpha_i)
 - \frac{\lambda n}{2}
    \|\wv^{(h-1)} + \frac1{\lambda n }   \Delta\alpha_i  \x_i\|^2
\\&\geq
 -\ell_i^*(-\alpha_i^{(h-1)}-s (u_i^{(h)}- \alpha_i^{(h-1)}))
 - \frac{\lambda n}{2}
    \|\wv^{(h-1)} + \frac1{\lambda n }   s (u_i^{(h)}- \alpha_i^{(h-1)}) \x_i\|^2.
\end{align*}
From strong convexity we have
\begin{align}\label{eq:asfdsafdacewfa}
&\ell_i^*\left(-\alpha_i^{(h-1)}-s (u_i^{(h)}- \alpha_i^{(h-1)})\right)\nonumber\\
\leq\ &
s \ell_i^*(- u_i^{(h)} )
+
(1-s)\ell_i^*(-\alpha_i^{(h-1)})
-\frac{\gamma}{2} s(1-s)
 (u_i^{(h)}- \alpha_i^{(h-1)})^2.
\end{align}
Hence
\begin{align*}
A&\overset{\eqref{eq:asfdsafdacewfa}}{\geq}
-s \ell_i^*(- u_i^{(h)} )
-
(1-s)\ell_i^*(-\alpha_i^{(h-1)})
+\frac{\gamma}{2} s(1-s)
 (u_i^{(h)}- \alpha_i^{(h-1)})^2\\
 &\hspace{2cm}
 - \frac{\lambda n}{2}
    \norm{ \wv^{(h-1)} + \frac1{\lambda n }   s (u_i^{(h)}- \alpha_i^{(h-1)}) \x_i }^2
\\
&=
\underbrace{-s ( \ell_i^*(- u_i^{(h)}) +  u_i^{(h)}   \x_i^T \wv^{(h-1)}  )
}_{s \ell( \x_i^T \wv^{(h-1)} ) }
+
\underbrace{(-
 \ell_i^*(-\alpha_i^{(h-1)})
 - \frac{\lambda n}{2}
    \| \wv^{(h-1)} \|^2)}_B
\\&\quad
+
\frac{s}{2}
 \left(
   \gamma(1-s)
   - \frac{1}{ \lambda n }
     s  \| \x_i\|^2
 \right)
  (u_i^{(h)}- \alpha_i^{(h-1)})^2
      +
     s (\ell_i^*(-\alpha_i^{(h-1)})+  \alpha_i^{(h-1)}  \x_i^T \wv^{(h-1)}.
\end{align*}
Therefore
\begin{align}\label{eq:asfcfvgar3wggr}
A-B
&\geq
  s
  \Big[\ \ell(\x_i^T \wv^{(h-1)} )
+    \ell_i^*(-\alpha_i^{(h-1)})
+    \alpha_i^{(h-1)}  \x_i^T \wv^{(h-1)} \nonumber\\
&\hspace{3cm}
+
\frac{1}{2}
 \left(
   \gamma(1-s)
   - \frac{1}{ \lambda n }
   s
     \|     \x_i\|^2
 \right)
  (u_i^{(h)}- \alpha_i^{(h-1)})^2
      \ \Big].
\end{align}

Recall that our above definition of the local pair of primal and dual problems gives
\begin{align*}
 P_k ( \wv_k; \overline\wv)
 -
  D_k (  \av_{[k]}; \overline\wv)
=
  \frac1{n} \sum_{i \in \mathcal{I}_k}
  \left(
  \ell_i( (\overline\wv+  \wv_k)^T \x_i )
  + \ell^*_i( -\alpha_i)
+\alpha_i (\overline\wv+  \wv_k)^T \x_i
  \right).
\end{align*}
where  $\wv_k := A_{[k]}\av_{[k]}$.%

If we take the expectation of \eqref{eq:asfcfvgar3wggr}
we obtain
\begin{align*}
\frac1s \E[A-B]
&\geq
\frac{n}{n_k}
\underbrace{
\frac1n
 \sum_{i \in \mathcal{I}_k}
  \left[ \ell(\x_i^T \wv^{(h-1)} )
+    \ell_i^*(-\alpha_i^{(h-1)})
+    \alpha_i^{(h-1)}  \x_i^T \wv^{(h-1)}
\right]}_{ P_k (  w_k^{(h-1)}; \overline\wv)
 -
  D_k (  \av_{[k]}^{(h-1)}; \overline\wv)}
\\&\quad -
\frac{s}{2\lambda n}
\underbrace{\frac{1}{n_k}
 \sum_{i \in \mathcal{I}_k}
 \left(
  \|     \x_i\|^2 -
   \frac{ \lambda n  \gamma(1-s)}{s}
 \right)
  (u_i^{(h)}- \alpha_i^{(h-1)})^2}_{G^{(h)}}.
\end{align*}
Therefore, we have obtained the claimed improvement bound
\begin{align*}
\frac ns \E[ D_k (  \av_{[k]}^{(h)}; \overline\wv)
 -
 D_k (  \av_{[k]}^{(h-1)}; \overline\wv)]
&\geq
\frac{n}{n_k}
(P_k (  \wv_k^{(h-1)}; \overline\wv)
 -
  D_k (  \av_{[k]}^{(h-1)}; \overline\wv))
   -
\frac{s}{2\lambda n}
G^{(h)}.
\end{align*}
\end{proof}

\begin{repproposition}{prop:smoothSDCApower}
Assume the loss functions $\ell_i$ are $(1/\gamma)$-smooth. Then for using \localSDCA, Assumption \ref{asm:localImprobvment} holds with\vspace{-2mm}
\begin{equation}%
 \Theta = \left(  1-\frac{\lambda n \gamma}{1+\lambda n \gamma} \frac1{\tilde n} \right)^H.
\end{equation}
where $\tilde n := \max_k n_k$ is the size of the largest block of coordinates.\end{repproposition}

\begin{lemma}[Local Convergence on the Subproblem]
For any $\av_{[k]}^{(0)} \in \R^{n_k}$
and $\overline\wv \in \R^d$
let us define
\begin{equation}\label{eq:asffewfwa}
\av_{[k]}^{(*)}
:= \argmax_{\av_{[k]} \in \R^{n_k}} D_k(\av_{[k]}; \overline\wv).
\end{equation}
If \localSDCA is used for $H$ iterations on block $k$, then
\begin{equation}
\label{eq:asfavfwavfwavg}
\E\big[ D_k (  \av_{[k]}^{(*)}; \overline\wv)
 -
 D_k (  \av_{[k]}^{(H)}; \overline\wv) \big]
 \leq \left(1-\frac{s}{n_k}\right)^H
 \left(
 D_k (  \av_{[k]}^{(*)}; \overline\wv)
 -
 D_k (  \av_{[k]}^{(0)}; \overline\wv)
 \right).
\end{equation}
\end{lemma}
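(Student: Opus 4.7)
The plan is to derive a one-step contraction on the local dual suboptimality $D_k(\av_{[k]}^{(*)};\overline\wv) - D_k(\av_{[k]}^{(h)};\overline\wv)$ from Lemma~\ref{lem:sdcaStep}, then unroll the recursion $H$ times. This mirrors the standard SDCA convergence argument (e.g.\ \cite{ShalevShwartz:2013wl}), but now applied to the local dual $D_k(\cdot;\overline\wv)$ on a single coordinate block, viewing $\overline\wv$ as a fixed offset.

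First, I would exploit the freedom to choose $s\in[0,1]$ in Lemma~\ref{lem:sdcaStep} so that the error term $G^{(h)}$ becomes non-positive. Under the standing assumption $\|\x_i\|\le 1$, and since $(1/\gamma)$-smoothness of $\ell_i$ is equivalent to $\gamma$-strong convexity of $\ell_i^*$, the bracket $\|\x_i\|^2 - \tfrac{\lambda n\gamma(1-s)}{s}$ defining each summand of $G^{(h)}$ is non-positive as soon as $s \le \tfrac{\lambda n\gamma}{1+\lambda n\gamma}$. Fixing such an $s$ drops the $G^{(h)}$ term, and \eqref{eq:asfavfawefvgwgea} simplifies to
\begin{equation*}
\E\bigl[D_k(\av_{[k]}^{(h)};\overline\wv) - D_k(\av_{[k]}^{(h-1)};\overline\wv) \,\big|\, \av_{[k]}^{(h-1)}\bigr]
\;\ge\; \frac{s}{n_k}\bigl(P_k(\wv_k^{(h-1)};\overline\wv) - D_k(\av_{[k]}^{(h-1)};\overline\wv)\bigr).
\end{equation*}

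Next I would invoke weak duality for the local pair \eqref{eq:localPrimal}–\eqref{eq:localDual}, which gives $P_k(\wv_k^{(h-1)};\overline\wv) \ge D_k(\av_{[k]}^{(*)};\overline\wv)$. Substituting this lower bound and subtracting both sides from $D_k(\av_{[k]}^{(*)};\overline\wv)$ turns the improvement inequality into the contraction
\begin{equation*}
\E\bigl[D_k(\av_{[k]}^{(*)};\overline\wv) - D_k(\av_{[k]}^{(h)};\overline\wv) \,\big|\, \av_{[k]}^{(h-1)}\bigr]
\;\le\; \Bigl(1 - \tfrac{s}{n_k}\Bigr)\bigl(D_k(\av_{[k]}^{(*)};\overline\wv) - D_k(\av_{[k]}^{(h-1)};\overline\wv)\bigr).
\end{equation*}
Taking a full expectation, iterating this inequality from $h=1$ to $H$, and using the tower property of conditional expectations yields the claimed bound \eqref{eq:asfavfwavfwavg}. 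This also makes Proposition~\ref{prop:smoothSDCApower} immediate on pushing $s$ to its largest admissible value $\tfrac{\lambda n\gamma}{1+\lambda n\gamma}$ and upper bounding $n_k\le \tilde n$.

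The main obstacle is, as usual in SDCA analyses, the passage through $G^{(h)}$: the per-step bound \eqref{eq:asfavfawefvgwgea} is only useful once $G^{(h)}\le 0$, which requires the calibrated choice of $s$ tied jointly to the strong convexity modulus $\gamma$ of $\ell_i^*$, the regularization $\lambda$, and the data-norm bound $\|\x_i\|\le 1$. The other subtlety is verifying that the \emph{local} weak duality $P_k(\wv_k;\overline\wv)\ge D_k(\av_{[k]};\overline\wv)$ indeed holds with $\wv_k = A_{[k]}\av_{[k]}$ at the \emph{current} iterate (not only at the optimum); this is exactly the content of the Proposition in Section~7.1 of the appendix, so it can be invoked as established. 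Once these two ingredients are in place, the recursion is a routine linear induction.
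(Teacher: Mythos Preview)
Your proposal is correct and follows essentially the same route as the paper: fix $s=\tfrac{\lambda n\gamma}{1+\lambda n\gamma}$ so that $G^{(h)}\le 0$ under $\|\x_i\|\le 1$, then use local weak duality $P_k\ge D_k(\av_{[k]}^{(*)};\overline\wv)$ to turn \eqref{eq:asfavfawefvgwgea} into a one-step contraction and iterate. The paper simply compresses your weak-duality-and-unroll step into a citation of Theorem~5 in \cite{ShalevShwartz:2013wl}, but the argument is identical.
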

\begin{proof}
We will use Lemma \ref{lem:sdcaStep} with
$s := \frac{\lambda n \gamma}{1+\lambda n \gamma} \in [0,1]$.
Because $\|\x_i\|\leq 1$, we have $G^{(h)}\leq 0$. Therefore
\begin{equation}
\E\big[ D_k (  \av_{[k]}^{(h)}; \overline\wv)
 -
 D_k (  \av_{[k]}^{(h-1)}; \overline\wv) \big]
 \geq \frac{s}{n_k}
\left(P_k (  \wv_k^{(h-1)}; \overline\wv)
 -
  D_k (  \av_{[k]}^{(h-1)}; \overline\wv) \right).
\end{equation}
Following the proof of Theorem 5 in \cite{ShalevShwartz:2013wl}
we obtain the claimed bound.
\end{proof}

Now, to prove Proposition \ref{prop:smoothSDCApower} 
it is enough to observe that
for fixed $k$ 
and any $\av = (\av_{[1]},\dots,\av_{[K]}) \in \R^n$,
assuming we define $\overline\wv= \sum_{k'\ne k} A_{[k']}\av_{[k']}$, it holds that
\begin{align*}
\suboptlocal(\av)
&\overset{\eqref{eq:suboptimality}}{=}
\max_{\hat \av_{[k]} \in \R^{n_k}} 
D((\av_{[1]},\dots,\hat \av_{[k]},\dots,\av_{[K]})) - D((\av_{[1]},\dots,\av_{[k]},\dots,\av_{[K]}))
\\
&=
D((\av_{[1]},\dots,\av_{[k]}^{(*)},\dots,\av_{[K]})) - D((\av_{[1]},\dots,\av_{[k]},\dots,\av_{[K]}))
\\
&=
 D_k (  \av_{[k]}^{(*)}; \overline\wv)
 -
 D_k (  \av_{[k]}; \overline\wv),
\end{align*}
 where $\av_{[k]}^{(*)}$
 is defined by \eqref{eq:asffewfwa}.

\section{Proof of Lemma \ref{lem:sigmaBound} -- The Problem Complexity Parameter $\sigma_{\min}$}
\begin{replemma}{lem:sigmaBound}
If $K=1$ then $\sigma_{\min}=0$.
For any $K\geq 1$, when assuming $\|\x_i\|\leq 1 \ \ \forall i$, we have 
\[
0 \le \sigma_{\min} \le \tilde n .
\]
Moreover, if datapoints between different workers are orthogonal, i.e. $(A^TA)_{i,j} = 0$ $\forall i,j$ such that $i$ and $j$ do not belong to the same part, then $\sigma_{\min}=0$.
\end{replemma}
\begin{proof}
 If $K=1$ then $\av_{[K]}\equiv \av$ and hence 
$\sigma_{\min}=0$.
For a non-trivial case, when $K>1$ we have
\begin{align*}
\sigma_{\min}
&\overset{\eqref{eq:sigma}}{=}
\max_{\av \in \R^n}
\lambda^2 n^2 %
\frac{ \textstyle{\sum}_{k=1}^K  
  \norm{ A_{[k]}   \av_{[k]} }^2  -
  \norm{ A \av }^2 
   }
   {\norm{ \av }^2}
\\
&=
\max_{\av \in \R^n}
\frac{ \textstyle{\sum}_{k=1}^K  
  \norm{ X_{[k]}   \av_{[k]} }^2  -
  \norm{ X \av }^2 
   }
   {\norm{ \av }^2}
\\
&\leq   
\max_{\av \in \R^n}
\frac{ \textstyle{\sum}_{k=1}^K   
  \norm{  X_{[k]}   \av_{[k]} }^2  
   }
   { \sum_{k=1}^K \norm{    \av_{[k]}   }^2   }
\leq
\max_{\av \in \R^n}
\frac{ \textstyle{\sum}_{k=1}^K   
    \tilde n \|\av_{[k]}\|^2
   }
   { \sum_{k=1}^K \norm{    \av_{[k]}   }^2   }  
   \leq \tilde n, 
\end{align*}
where we have used the definition of the rescaled data matrix $A = \frac{1}{\lambda n} X$.
The last inequality follows from the fact that since the datapoints have bounded 
norm $\|\x_i\|\leq 1$, so that for all parts $k \in \{1,\dots,K\}$ we have
\\ $
\max_{\av_{[k]} \in \R^{n_k}} \frac{ \|X_{[k]} \av_{[k]}\|^2}{\|\av_{[k]}\|^2} 
= \|X_{[k]}\|_{\text{op}}^2 \leq \|X_{[k]}\|_{\text{frob}}^2
\leq n_k \leq \tilde n
$, where $\tilde n := \max_k n_k$.

The case when $A^TA$ is block-diagonal (with respect to the partition) is trivial.
\end{proof}

\end{document}